\newcommand{\eg}{\emph{e.g.}}
\newcommand{\ie}{\emph{i.e.}}
\newcommand{\xhdr}[1]{\vspace{1.7mm}\noindent{{\bf #1.}}}
\theoremstyle{plain}
\newtheorem{theorem}{Theorem}[section]
\theoremstyle{definition}
\theoremstyle{remark}
\newcommand{\std}[1]{\fontsize{8}{8}\selectfont$\pm${#1}}
\def\shownotes{1}  
\newcommand{\kaidi}[1]{{{\textcolor{blue}{[Kaidi: #1]}}}}
\newcommand{\shir}[1]{{{\textcolor{black}{#1}}}}
\newcommand\jure[1]{{\color{red}\{\textit{#1}\}$_{jure}$}}
\newcommand{\kaidi}[1]{}
\newcommand\jure[1]{}
\newcommand{\ours}{{CoFree-GNN}}
\newcommand{\drop}{{DropEdge-$K$}}
\newcommand{\reweight}{{Degree-Aware Reweighting}}
\def\eqref#1{equation~\ref{#1}}
\def\1{\bm{1}}
\def\vh{{\bm{h}}}
\def\vm{{\bm{m}}}
\def\vw{{\bm{w}}}
\def\vx{{\bm{x}}}
\def\vy{{\bm{y}}}
\def\mU{{\bm{U}}}
\def\mW{{\bm{W}}}
\DeclareMathAlphabet{\mathsfit}{\encodingdefault}{\sfdefault}{m}{sl}
\SetMathAlphabet{\mathsfit}{bold}{\encodingdefault}{\sfdefault}{bx}{n}
\def\gE{{\mathcal{E}}}
\def\gG{{\mathcal{G}}}
\def\gV{{\mathcal{V}}}
\def\gX{{\mathcal{X}}}
\newcommand{\R}{\mathbb{R}}
\DeclareMathOperator*{\argmin}{arg\,min}
\title{Communication-Free Distributed GNN Training\\
with Vertex Cut}
\author{
Kaidi Cao\textsuperscript{\rm 1},
Rui Deng\textsuperscript{\rm 1},
Shirley Wu\textsuperscript{\rm 1},
Edward W Huang\textsuperscript{\rm 2},
Karthik Subbian\textsuperscript{\rm 2},
Jure Leskovec\textsuperscript{\rm 1}\\
\textsuperscript{\rm 1}Stanford University,
\textsuperscript{\rm 2}Amazon}
\begin{document}

\maketitle

\begin{abstract}

Training Graph Neural Networks (GNNs) on real-world graphs consisting of billions of nodes and edges is quite challenging, primarily due to the substantial memory needed to store the graph and its intermediate node and edge features, and there is a pressing need to speed up the training process. A common approach to achieve speed up is to divide the graph into many smaller subgraphs, which are then distributed across multiple GPUs in one or more machines and processed in parallel. However, existing distributed methods require frequent and substantial cross-GPU communication, leading to significant time overhead and progressively diminishing scalability.
Here, we introduce \ours{}, a novel distributed GNN training framework that significantly speeds up the training process by implementing communication-free training. The framework utilizes a Vertex Cut partitioning, \ie, rather than partitioning the graph by cutting the edges between partitions, the Vertex Cut partitions the edges and duplicates the node information to preserve the graph structure.
Furthermore, the framework maintains high model accuracy by incorporating a reweighting mechanism to handle a distorted graph distribution that arises from the duplicated nodes.
We also propose a modified DropEdge technique to further speed up the training process.
Using an extensive set of experiments on real-world networks, we demonstrate that \ours{} speeds up the GNN training process by up to 10 times over the existing state-of-the-art GNN training approaches.

\end{abstract}

\section{Introduction}

\begin{figure}[h]
    \centering
    \includegraphics[width=0.98\textwidth]{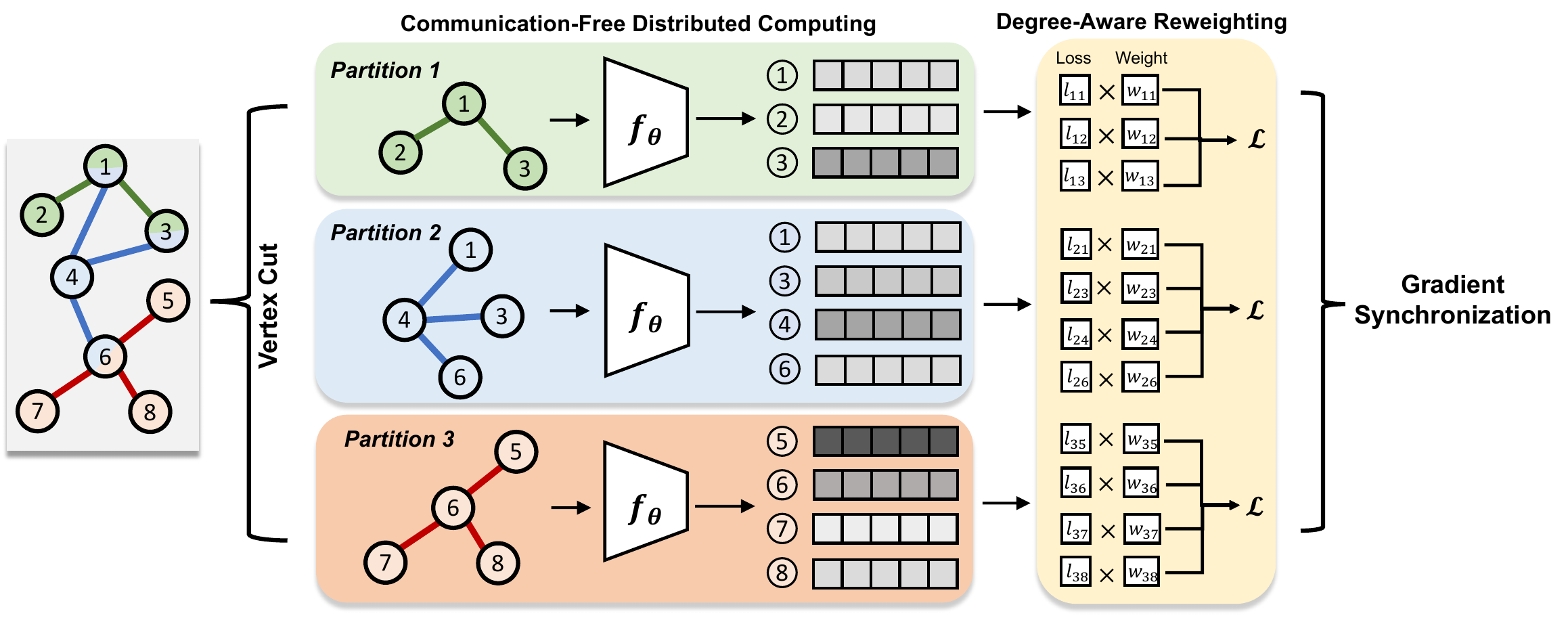}
    \caption{Overview of \ours{} framework. We begin by dividing the initial large graph into smaller subgraphs using a Vertex Cut partitioning algorithm. Each computational node receives a subgraph and carries out its operations without needing any communication between GPUs. The gradients, which are weighted based on importance, are then gathered to update the weights.}
    \label{fig:approach}
\end{figure}

Scaling the training of Graph Neural Networks (GNNs)~\citep{kipfsemi,hamilton2017inductive} on large graphs has became a crucial problem~\citep{liu2021exact,wang2019deep}.
A traditional GNN training pipeline generally fits the entire graph including the adjacency matrix and features into a single GPU.
However, since GPUs offer a limited amount of memory, training becomes very challenging for large-scale graphs, hindering the deployment of GNNs in real-world applications, such as social networks~\citep{hamilton2017inductive}, knowledge graphs~\citep{wang2018acekg}, and recommender systems~\citep{ying18rec, lightgcn}. 

Large-scale GNN training has been implemented by taking advantage of large CPU memory as well as multiple GPUs~\citep{shao2022distributed}.
In general, the input graph is partitioned or mini-batch sampled to obtain smaller subgraphs, such that subgraphs can fit into a memory of a single GPU~\citep{hamilton2017inductive,chiang2019cluster,zenggraphsaint}. Then, the model is trained \shir{with a data parallel scheme} on multiple GPUs, where each GPU processes its designated subgraph. 

Distributed training of GNNs is a challenging task, since graph data presents an additional layer of complexity over other data formats due to the interconnections between nodes and edges.
Specifically, since different partitions are connected by edges, dropping these cross-partition edges results in information loss about the graph structure.
To avoid this information loss, distributed GNN training methods require frequent communication between GPUs.
For example, prior approaches support the cross-partition connectivity by synchronizing node embeddings between partitions~\citep{zheng2020distdgl,wanpipegcn,wan2022bns}. 

However, such communication often introduces a major overhead and results in a significant increase in training time. Various strategies have been developed to minimize the required communication, such as advanced compression techniques~\citep{zhang2023boosting}, selective feature exchanges~\citep{zheng2020distdgl,wan2022bns}, and optimized communication protocols that prioritize critical data transfers~\citep{wanpipegcn}. Despite the introduction of these methods, communication overhead continues to be a major obstacle in achieving high scalability for GNN training, especially when the number of partitions scales. 

In this work, we propose \ours{}, a fully communication-free distributed GNN training pipeline (Figure~\ref{fig:approach}). By ``communication-free'', we mean that, given a subgraph, the training process on each GPU is self-sufficient and doesn't require any learned embeddings from the other GPUs.
The key idea of \ours{} is to adopt the Vertex Cut graph partitioning method~\cite{PowerGraph}. Specifically, Vertex Cut allocates edges to different GPUs and constructs subgraphs based on the assigned edges on each GPU. Then, \ours{} regards the subgraphs as isolated partitions and trains the GNN model based on the gathered gradients, during which there is no data swap between CPUs and GPUs. Compared to Edge Cut~\citep{zheng2020distdgl} that divides nodes among the GPUs and removes cross-partition edges, Vertex Cut \shir{preserves all the information} about the graph structure since each node or edge is present in at least one partition.

While \ours{} allows for duplicated nodes between different partitions which has some impact on the computational cost, we show that \ours{} is faster than the existing state-of-the-art methods, thanks to its communication-free nature. 

The duplicated nodes across different partitions may distort the graph distribution, \eg, node degree distribution, which could cause bias and hinder the model's ability to generalize to unseen test graphs.
To address this bias, we perform a theoretical examination of the process of gathering gradients from different Vertex Cut partitions. Based on these theoretical insights, we introduce a reweighting mechanism called \reweight{} (DAR) to minimize the bias on gradients caused by the uneven duplication of nodes. Therefore, the reweighted gradients obtained from isolated subgraphs accurately reflect \shir{the training paradigm of Empirical Risk Minimization (ERM) on the full graph.}

Lastly, we integrate our method with a modified DropEdge technique, which further shortens the training time.

We present comprehensive experiments and ablation studies which consistently demonstrate the advantages of \ours{} in both training efficiency and accuracy. 
Specifically, we evaluate the proposed \ours{} on four large real-world datasets: Reddit, Yelp, ogbn-products, and ogbn-papers100M. 
To highlight, compared to the previous state-of-the-art methods, \ours{} achieves a speed increase of up to 10x, while ensuring that the model performance remains on par. Furthermore, the inherent property of \ours{} to operate without the need for communication enhances its scalability,
especially as the number of partitions increases.

\section{Related Works}

\xhdr{Distributed GNN Training} Significant research has focused on different aspects of distributed computing techniques to speed up GNN training on large graphs, addressing challenges related to reducing communication, computation, or memory requirements.
Prior works like ROC~\citep{jia2020improving}, NeuGraph~\citep{ma2019neugraph}, and AliGraph~\citep{zhu2019aligraph} have adopted a strategy of partitioning large graphs and storing all the partitions in the CPU memory. 
However, researchers observed that the training efficiency of these methods is hindered by costly data exchanges between the CPUs and GPUs~\citep{wan2022bns}.
Many follow-up works then aim to reduce communication overhead. For example, DistDGL~\citep{zheng2020distdgl} introduces a min-cut graph partitioning algorithm along with multiple balancing constraints and static balancing of the computations. $P^3$~\cite{P3} combines model and data parallelism to avoid transferring features, while assuming the feature dimension is larger than hidden embeddings.
Recently, SAR~\citep{mostafa2022sequential} offers memory reduction for full-graph training; however, it imposes a higher computational load as a result of graph rematerialization.
Finally, PipeGCN~\citep{wanpipegcn} hides the communication overhead by pipelining inter-partition communication with intra-partition computations and BNS-GCN~\citep{wan2022bns} introduces boundary sampling to exchange subsets of neighboring nodes between GPUs, thus substantially reducing the time compared to communicating the complete sets of neighbors. 
While prior work concentrates on reducing the substantial communication overhead during GNN training, our approach here is fundamentally different as it adopts Vertex Cut~\cite{PowerGraph} to restore the full-graph training paradigm without necessitating any communication between GPUs.

\xhdr{Sampling-Based GNN Training}
Another line of work to scale-up GNN training consists of sampling-based training methods: 
(1) Node sampling methods~\citep{hamilton2017inductive, ying18rec, chen18stoc} consider a fixed number or some other subset of neighbors in the message-passing process. However, with the increasing number of message-passing layers, the computational graphs for nodes grow exponentially, requiring exponentially larger resources. 
(2) Layer sampling methods ~\cite{chenfastgcn, zou2019layer, huang18ada} mitigate the disadvantages of the node sampling methods by sampling the $k$-hop neighbors ($k\geq 1$) for a given node independently, thus maintaining a fixed neighborhood expansion factor. 
And, (3) Subgraph sampling methods~\cite{chiang2019cluster, zenggraphsaint, GNNAutoScale, mvs} construct a mini-batch based on the nodes of a sampled subgraph to conduct GNN optimization.

For example, 
Cluster-GCN~\citep{chiang2019cluster} partitions the graph into non-overlapping subgraphs, each containing nodes with high connectivity.

By processing each subgraph separately, memory usage is decreased, which helps with scalability. 
Additionally, gradient bias correction methods have been explored in GraphSAINT~\citep{zenggraphsaint} which uses subgraph sampling with a combination of node, edge, and subgraph sampling strategies. The authors proposed a normalization technique to correct for sampling biases, which enables the model to generalize better on unseen data. Similar techniques for bias correction have been studied in FastGCN~\citep{chenfastgcn}. Sampling-based methods are complementary to our \ours{}, since sampling strategies can be performed on individual partitions. 
Sampling-based methods are complementary to \ours{}, since sampling strategies can be performed on individual partitions. 
\section{Preliminaries}
\xhdr{Graph Neural Networks (GNNs)} In this work, we mainly consider graph node classification tasks following previous works~\cite{chiang2019cluster, zenggraphsaint}.

We define an input graph as $\gG = \{\gV, \gE \}$, where $\gV = \{v_1, v_2, ..., v_n \}$ is the node set and $\gE \subseteq \gV \times \gV$ is the edge set. We use $D(v_i)$ to represent the degree of a node $v_i$. We denote the node features as $\gX = \{\vx_i\in \R^{d}\mid i=1,\ldots,n \}$ and the corresponding node labels as $\vy = \{y_i\in \R^{C} \mid  i=1,\ldots,n \}$. 

Generally, a GNN takes graph-structured data as input and generates embedding vectors for every node in the graph.
Each GNN layer carries out two primary operations: message passing and neighbor aggregation. These processes can be expressed as follows:
\begin{align*}
    \vh_v^{(l)} = \textsc{Agg}^{(l)}\Big(\{\textsc{Msg}^{(l)}(\vh_u^{(l-1)}), u \in \mathcal{N}_\gG(v)\}, \vh_v^{(l)}\Big),
\end{align*}
where $\vh_v^{(l)}$ represents the node embedding after $l$ iterations, while $\mathcal{N}_\gG(v)$ signifies the immediate neighbors of node $v$. The abbreviation $\textsc{Agg}$ refers to the aggregation function, and $\textsc{Msg}$ stands for the message function.
A popular instance of GNNs is GraphSAGE~\citep{hamilton2017inductive} with a mean aggregator:
\begin{align*}
    \vh_v^{(l)} = \mU^{(l)}\textsc{Concat}\Big(\textsc{Mean}\Big(\{\textsc{ReLU}(\mW^{(l)}\vh_u^{(l-1)}),u \in \mathcal{N}_\gG(v)\}\Big), \vh_v^{(l-1)}\Big),
\end{align*}
where $\mU$ and $\mW$ are learnable weight matrices.

\xhdr{Vertex Cut Partitioning}
A Vertex Cut~\cite{PowerGraph} partitioning of a graph $\gG$ involves dividing the edge set $\gE$ into $p$ disjoint partitions, denoted by ${\gG[i] = ( \gV[i], \gE[i] )}$, where each $\gE[i]$ is a subset of $\gE$ and the union of all subsets equals $\gE$. Formally,
\vspace{-5pt} 
\begin{align*}
\gE[i]\cap \gE[j]=\emptyset\ \text{ for any $i,j\leq p$ and $i\neq j$},\ \ \  {\textstyle\bigcup\limits_{i= 1}^{p}}\gE[i]=\gE.
\end{align*}
Note that nodes could be replicated across the Vertex Cut partitions to maintain the information about the graph structure.
Replication Factor (RF) then quantifies the amount of replication:
\begin{equation}
    \text{RF}(\gG[1], ..., \gG[p]) = \frac{1}{|\gV|} \sum_{i=1}^p |\gV[i]|.
\end{equation}
    
$\text{RF}(\gG[1], ..., \gG[p])$ can be viewed as an approximation of the computational overhead of the Vertex Cut partition. The goal of Vertex Cut is to minimize this metric to improve efficiency. We also define the replication factor for a single node $v_j$ in $\gG$ as $\text{RF}(v_j) = \sum_{i=1}^p \mathbb{I}(v_j \in \gV[i])$, where $\mathbb{I}$ is an indicator function.  
Various Vertex Cut methods have been proposed to find good partitions~\cite{karypis1997metis, NE, xie2014distributed, hep}. For example, Neighbor Expansion~\citep{NE} (NE) is a Vertex Cut partitioning method that maximizes edge locality in a graph. The main idea is to start with a small set of edges and iteratively expand the set by adding neighboring edges until a balanced partition is achieved. NE has been shown to outperform other state-of-the-art methods on several large graphs. Thus, we adopt NE by default. 

\xhdr{Edge Cut Partitioning} The Edge Cut~\citep{zheng2020distdgl} partitioning divides the node set $\gV$ into $p$ disjoint subsets, \ie, $\{\gV[1], \ldots, \gV[p]\}$. Each partition takes the corresponding node set and constructs the edge set $\gE[i]=\{(v_k, v_m)\mid v_k \in \gV[i], v_m \in \gV[i] \}$. The edges connecting
nodes from different partitions are usually discarded.

\section{Methodology}

We introduce our framework \ours{} in this section. Specifically, we analyze the effect of duplicated nodes in Vertex Cut on performance in Section~\ref{sec:leadin}. We discuss the impact of the changed graph distribution on the model accuracy in Section~\ref{sec:rf}, and we propose a solution to address this change using gradient reweighting in Section~\ref{sec:reweight}. Moreover, in Section~\ref{sec:expedit}, we integrate DropEdge technique into our method to further improve the training speed. We summarize the training procedure of \ours{} in Algorithm~\ref{alg:ours}.

\subsection{Lead-in: From Edge Cut to Vertex Cut} 
\label{sec:leadin}
While Edge Cut serves as the primary large-scale graph partitioning approach in popular graph machine learning packages~\citep{fey2019fast, wang2019deep} and is widely combined with the METIS algorithm~\citep{karypis1997metis}, one of its significant drawbacks is that it discards edges between partitions. These discarded edges can lead to significant information loss about the graph structure, \ie, the neighborhood information of the nodes with discarded edges is partially missing, which can degrade the model performance.

To tackle this challenge, the concept of halo nodes~\cite{zheng2020distdgl} has been introduced to maintain the vital connection information across various partitions. 
These are essentially copies of the boundary nodes from other partitions of the graph, providing the necessary information for updating the nodes within their own partition. Synchronization is required after each iteration to ensure that the halo nodes' information is up-to-date across all GPUs.
Thus, Edge Cut partition with halo nodes still requires the communication between GPUs.
In principle, employing a distributed training pipeline on the combination of Edge Cut and halo nodes (with communications), the model can incorporate all the graph information and it is essentially the same as the full graph training pipeline.

Our primary observation here is that by employing a standard data parallel training pipeline\footnote{The training data is partitioned across the replicas, typically in a distributed computing setup such as multiple GPUs or machines. Each replica performs forward and backward propagation on its own subset of the data, allowing for parallel computation. The gradients from each replica are synchronized and combined to update the shared model parameters.} on the combination of both original nodes and halo nodes, the model theoretically has access to all nodes and edges during training. Consequently, this suggests that such a standard training paradigm is well suited to speed up distributed GNN training and may allow us to avoid the intrinsically slow communication.
We thus adopt a standard data parallel training pipeline
over a collection of partitioned graphs.  
To maintain information about graph structure, some nodes need to be duplicated. As duplicated nodes require more computation which negatively impacts the training speed, we aim to reduce their number.

The following theorem demonstrates that solely using Vertex Cut is always more advantageous than using Edge Cut with halo nodes.

\begin{theorem}
\label{th:cut}
Given an Edge Cut with $H$ halo nodes, any Vertex Cut that adheres to the same partition boundary will exhibit a number of duplicated nodes that is strictly less than $H$. 
\end{theorem}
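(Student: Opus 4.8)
The plan is a copy-by-copy comparison between what the Edge-Cut-with-halo construction stores and what the Vertex Cut stores. Fix notation: let $V_1,\dots,V_p$ be the node sets of the Edge Cut, $\pi(v)$ the index with $v\in V_{\pi(v)}$, and for each part set $\mathrm{halo}(i)=\{\,w\notin V_i:\mathcal{N}_\gG(w)\cap V_i\neq\emptyset\,\}$, so that $H=\sum_{i=1}^{p}|\mathrm{halo}(i)|$ and the halo construction stores $|\gV|+H$ node copies in total (each node's owned copy, plus its halo copies). Write $(W_i,F_i)$ for the $i$-th Vertex Cut partition; a node lies in $W_i$ exactly when $F_i$ contains an edge incident to it, and, discarding isolated vertices (which produce neither halos nor duplicates), the number of duplicated nodes of the Vertex Cut equals $\sum_i|W_i|-|\gV|$.

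The weak bound is essentially free. Reading ``adheres to the same partition boundary'' as: every edge internal to some $V_i$ goes into $F_i$, and every cut edge $(u,v)$ goes into $F_{\pi(u)}$ or $F_{\pi(v)}$ — then every edge of $F_i$ has an endpoint in $V_i$, so every vertex of $W_i$ is in $V_i$ or adjacent to it, i.e.\ $W_i\subseteq V_i\cup\mathrm{halo}(i)$. Summing over $i$, $\sum_i|W_i|\le\sum_i\bigl(|V_i|+|\mathrm{halo}(i)|\bigr)=|\gV|+H$, so the number of duplicated nodes is $\le H$.

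Upgrading this to a strict inequality is the substantive part, and I expect it to be the main obstacle. It suffices to find one index $i$ with $W_i\subsetneq V_i\cup\mathrm{halo}(i)$ — one Edge-Cut halo node that the Vertex Cut never reconstructs. Since $H\ge1$ there is a cut edge $e=(u,v)$ with $\pi(u)=b\neq a=\pi(v)$; the Vertex Cut commits $e$ to exactly one side, say $F_a$. Then $v\in\mathrm{halo}(b)$, while the only edges that could force $v\in W_b$ are edges from $v$ into $V_b$ assigned to $F_b$. If $e$ is the unique such edge we are done. The delicate case is when two parts are densely interconnected, so $v$ has several cut edges into $V_b$: a balanced split of that bundle could a priori leave $v\in W_b$, so one must run the bookkeeping over the whole family of cut edges joining a fixed ordered pair of parts — using that each is committed to exactly one side — to guarantee that some boundary node is left unreconstructed somewhere. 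One clean sub-case that shows the mechanism: if, for each ordered pair of parts, all the cut edges between them are sent to one side, say all $V_a$--$V_b$ cut edges to $F_a$, then any $v\in V_a$ incident to one of them has $v\in\mathrm{halo}(b)\setminus W_b$, the strict drop we want. Extending this to an arbitrary boundary-respecting assignment, and thereby pinning down exactly which hypothesis on the Vertex Cut the strictness really requires, is where the proof has to do its careful work.
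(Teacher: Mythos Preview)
Your setup and the weak bound $\sum_i|W_i|\le |\gV|+H$ are correct and considerably more careful than what the paper does. But the strict inequality you are trying to push through for \emph{every} boundary-respecting Vertex Cut is in fact false, and the ``delicate case'' you flag is exactly where it breaks. Take $p=2$, $V_1=\{a,b\}$, $V_2=\{c,d\}$, no internal edges, and the four cut edges $(a,c),(a,d),(b,c),(b,d)$. Then $\mathrm{halo}(1)=\{c,d\}$, $\mathrm{halo}(2)=\{a,b\}$, so $H=4$. The assignment $F_1=\{(a,c),(b,d)\}$, $F_2=\{(a,d),(b,c)\}$ respects the boundary in your sense, yet $W_1=W_2=\{a,b,c,d\}$ and the number of duplicated nodes is $8-4=4=H$. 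So no bookkeeping over bundles of cut edges can close that gap under the universal reading; the obstacle you sensed is real and fatal.

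The paper's own argument is a two-line sketch that, read carefully, only supports an \emph{existential} claim: starting from the Edge-Cut-with-halos layout, it passes to a Vertex Cut by deleting halo copies (for each cut edge, commit it to one side and drop the opposite halo), noting that at least one copy must be removed. Your framework already finishes that version: the weak bound gives $\le H$, and the construction you wrote down --- send all cut edges between a fixed ordered pair of parts to one side --- exhibits a boundary-respecting Vertex Cut achieving the strict drop. In short, the word ``any'' in the statement should be read as ``a'' (consistent with the paper's proof); under the universal reading you adopted, the theorem is simply not true.
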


See the detailed proof in Section~\ref{app:proof}. This theorem indicates that replacing an Edge Cut plus halo nodes with a Vertex Cut partition with duplicated nodes leads to more efficient distributed training.

\subsection{Problem: Imbalanced Replication Factor Hinders Communication-free Training} 
\label{sec:rf}

While Vertex Cut reduces the number of duplicated nodes versus Edge Cut, the Vertex Cut partitions introduce an imbalanced duplication factor between the nodes.
For example, a high degree node $v_j$ could potentially have a higher duplication factor $\text{RF}(v_j)$ compared to nodes with lower degrees, which distorts the node feature frequency from the original graph distribution and could yield suboptimal model performance.
In other words, due to the duplicated nodes, Vertex Cut partitions may over-represent certain high-degree nodes while under-representing others, creating a distribution gap between the original node feature distribution and the distribution after partitioning. Formally, we encapsulate the issue of imbalanced replication factor into the following theorem:

\begin{theorem}
Let use consider a graph $\gG$ with no isolated node, where the degree of each node $D(v_j)$ in the graph $\gG$ follows a power-law degree distribution. Given that we divide the original graph into $p$ partitions, the replication factor of node $v_j$, i.e., $\text{RF}(v_j)$ presents an imbalance ratio greater than or equal to:
$$\frac{1 - (1 - \frac{1}{p})^{\max_j D(v_j)}}{1 - (1 - \frac{1}{p})^{\min_j D(v_j)}}$$
Here, $\max_j D(v_j)$ and $\min_j D(v_j)$ denote the maximum and minimum node degrees in graph $\gG$, respectively. 
\end{theorem}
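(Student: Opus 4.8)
The plan is to treat a Vertex Cut as a uniform random assignment of the $|\gE|$ edges to the $p$ partitions and to compute the \emph{expected} replication factor of a node as an explicit function of its degree; the stated bound then drops out because this function is monotone increasing. Concretely, I would regard the partition label of each edge as an independent uniform draw from $\{1,\dots,p\}$ (this is exact in the large-graph limit; the exact-balance constraint only perturbs the probabilities below by terms that vanish as $|\gE|\to\infty$, since it merely replaces independent sampling by sampling without replacement). Since a node $v_j$ lies in $\gV[i]$ exactly when at least one of its $D(v_j)$ incident edges is placed in partition $i$, complementation gives
\begin{equation*}
\Prob\bigl(v_j \in \gV[i]\bigr) = 1 - \Prob(\text{no incident edge of } v_j \text{ falls in partition } i) = 1 - \Bigl(1 - \tfrac{1}{p}\Bigr)^{D(v_j)} .
\end{equation*}

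First I would apply linearity of expectation across the $p$ partitions to obtain $\E[\text{RF}(v_j)] = \sum_{i=1}^{p}\Prob(v_j \in \gV[i]) = f(D(v_j))$, where $f(d) := p\bigl(1 - (1-\tfrac1p)^{d}\bigr)$. The structural observation driving everything is that $f$ is strictly increasing on the positive integers, because $(1-\tfrac1p)^{d}$ is strictly decreasing in $d$ whenever $p \ge 2$. Consequently the largest expected replication factor over all nodes is attained at a maximum-degree node and equals $f(\max_j D(v_j))$, while the smallest is attained at a minimum-degree node and equals $f(\min_j D(v_j))$. The hypothesis that $\gG$ has no isolated node is used precisely here: it forces $\min_j D(v_j) \ge 1$, so that $f(\min_j D(v_j)) \ge f(1) = 1 > 0$ and the ratio in the statement is well defined.

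To pass from these expected replication factors to the imbalance ratio $\max_j \text{RF}(v_j)/\min_j \text{RF}(v_j)$ itself, I would invoke the elementary inequalities $\E[\max_j \text{RF}(v_j)] \ge \max_j \E[\text{RF}(v_j)]$ and $\E[\min_j \text{RF}(v_j)] \le \min_j \E[\text{RF}(v_j)]$, which hold for any finite family of random variables. Combining them with the previous step, the expected imbalance ratio is bounded below by
\begin{equation*}
\frac{\max_j \E[\text{RF}(v_j)]}{\min_j \E[\text{RF}(v_j)]} \;=\; \frac{f(\max_j D(v_j))}{f(\min_j D(v_j))} \;=\; \frac{1 - (1 - \tfrac{1}{p})^{\max_j D(v_j)}}{1 - (1 - \tfrac{1}{p})^{\min_j D(v_j)}} ,
\end{equation*}
the common factor $p$ cancelling. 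The power-law degree assumption is what makes this bound substantive: a heavy tail forces $\max_j D(v_j)$ to be far larger than $\min_j D(v_j)$, so the numerator is close to $1$ while the denominator can be as small as $1/p$, yielding an imbalance that grows with the number of partitions $p$.

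The step I expect to be the main obstacle is the modeling assumption itself. A Vertex Cut produced by an algorithm such as Neighbor Expansion is deterministic and explicitly tries to minimize total replication, so I must argue that the imbalance is an intrinsic, distribution-level consequence of the degree spectrum rather than an artifact of a particular random model; the cleanest remedy is to phrase the claim in expectation over balanced edge assignments. The accompanying technical nuisance is the dependence introduced by the exact-balance constraint — edges are then sampled without replacement — which I would handle through a telescoping-product (negative-association) estimate for $\Prob(v_j\in\gV[i])$, checking that the resulting corrections vanish as the graph size grows and, crucially, that they never reverse any of the inequalities the argument relies on.
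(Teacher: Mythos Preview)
Your core computation---model the Vertex Cut as independent uniform edge assignment, compute $\E[\text{RF}(v_j)] = p\bigl(1-(1-\tfrac1p)^{D(v_j)}\bigr)$ via complementation and linearity, then take the ratio of this quantity at the max- and min-degree nodes---is exactly what the paper does. The paper's proof is in fact shorter and less careful than yours: it simply computes $\E[\text{RF}(v_j)]$ and declares the ratio $\frac{\max_j \E[\text{RF}(v_j)]}{\min_j \E[\text{RF}(v_j)]}$ to be ``the maximum imbalance ratio,'' without your explicit monotonicity check or your use of the no-isolated-node hypothesis to ensure the denominator is positive.

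Two points of divergence are worth flagging. First, your attempt to pass from the ratio of expectations to the \emph{expected} imbalance ratio via the inequalities $\E[\max]\ge\max\E$ and $\E[\min]\le\min\E$ does not go through as written: those two inequalities give you $\E[\max]/\E[\min] \ge (\max\E)/(\min\E)$, but that is a statement about the ratio of expectations, not about $\E[\max/\min]$, which is what ``expected imbalance ratio'' means. The paper sidesteps this entirely by never distinguishing the two---it treats the ratio of expected replication factors as the imbalance ratio and stops there, so your extra step is both unnecessary for matching the paper and subtly flawed. Second, where you propose to handle non-random (e.g.\ NE-style) partitioners by passing to a balanced-assignment model with negative-association corrections, the paper instead argues heuristically that practical Vertex Cut algorithms deliberately cut high-degree vertices more often than a random assignment would, so the imbalance can only be \emph{larger} than in the random baseline---hence the ``greater than or equal to'' in the statement.
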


Under the common assumption that the node degrees follow a power-law distribution~\citep{PowerGraph},
this theorem emphasizes the degree of imbalance in the replication factor when partitioning the graph, which creates a distribution gap between the partitions and the original graph.

The distribution gap could lead to crucial implications in the GNN distributed training. Specifically, when applying Empirical Risk Minimization (ERM) on Vertex Cut partitions, the training is essentially performed on a less accurate representation of the graph, which may negatively impact the final accuracy of the model.

\subsection{Solution: Reweighting Gradients to Recover Full Graph Training Paradigm} 
\label{sec:reweight}

To address the distribution gap, a na\"ive way is to synchronize node embeddings between GPUs. However, this approach introduces  communication overhead which reduces the training speed. Inspired by the previous work~\cite{zenggraphsaint}, we propose \reweight{} (DAR) to reweight the gradients from different Vertex Cut partitions to recover the full-graph training paradigm. The intuition is to counterbalance the distribution gap caused by the different node replication factors. Notably, the reweighting mechanism is easy to implement in a distributed training pipeline. 

In order to determine the appropriate weights for reweighting the gradients, we formulate an optimization problem. Specifically, the general idea is to find a weight assignment that minimizes the Frobenius norm of the difference between the gradients calculated from the original graph and those from the graph partitions.
The optimization problem can be formulated as follows:
\begin{equation}
    \vw^* = \argmin _{\vw} | \nabla_\theta \sum_{v_j \in \gV} \mathcal{L} (\vh_j^L, y_j) - \sum_{i=1}^p \nabla_\theta \sum_{v_j \in \gV[i]} \vw_{ij} \cdot \mathcal{L} (\vh_j^{L}[i], y_j)|_F.
\end{equation}
Here, $\gV$ and $\gV[i]$ denote the set of nodes in the original graph and the $i$-th partition, respectively. $\vh_j^L$ and $\vh_j^L[i]$ represent the model's final-layer prediction on the node $v_j$ of the original graph and the $i$-th partition, respectively.
The term $\mathcal{L} (\vh_j^L, y_j)$ is the loss 
calculated using the true labels $y_j$ and the model prediction $\vh_j^L$. The variable $\vw_{ij}$ denotes the weights assigned to the loss of node $v_j$ in the $i$-th partition. This objective aims to minimize the discrepancy between the gradients of the original graph and the weighted gradient computed from the individual partitions. 

To obtain the optimal weight assignment, we observe that it can be achieved by an approximation, \ie, $\vw^*_{ij} =  {D(v_j[i])}/{D(v_j)}$. Specifically, let us consider a single GraphSAGE layer ($L=1$).
We can prove the following theorem:

\begin{theorem}
Let $f_\theta : \gG \rightarrow \mathbb{R}^C$ be a single-layer GraphSAGE model with sigmoid activation and binary cross-entropy loss $\ell(\cdot, \cdot)$. Let there be an undirected, unweighted
graph $\gG = (\gV, \gE)$, together with its Vertex Cut partition $\{\gG[1], ..., \gG[p] \}$, where $\gG[i] = (\gV[i], \gE[i])$. Assuming graph $\gG$ is homophilic, then for a distributed training algorithm running on all partitions, optimizing the following reweighted loss recovers the full graph training paradigm:

\begin{equation}
    \label{eq:reweight}
    \mathcal{L}(f_\theta, \gG[i]) = \sum_{v_j[i] \in \gV[i]}  \frac{D(v_j[i])}{D(v_j)} \cdot \ell(\vh_j[i], y_j).  
\end{equation}
\end{theorem}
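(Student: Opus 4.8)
The plan is to reduce the somewhat informal phrase ``recovers the full graph training paradigm'' to a precise statement: at every parameter value $\theta$, the sum over partitions of the gradients of the reweighted per-partition losses of \eqref{eq:reweight} equals the gradient of the full-graph ERM objective $\mathcal{L}(f_\theta,\gG)=\sum_{v_j\in\gV}\ell(\vh_j,y_j)$. Since a data-parallel distributed algorithm updates $\theta$ by exactly this aggregated gradient, matching it at all $\theta$ makes the distributed run and the full-graph run trace the same trajectory. The first ingredient is that $\vw^*_{ij}=D(v_j[i])/D(v_j)$ is a convex combination over the partitions containing $v_j$. This follows directly from the Vertex Cut definition: the $\gE[i]$ are disjoint with union $\gE$, and for an undirected graph an edge $(v_j,u)\in\gE[i]$ forces $v_j,u\in\gV[i]$, so every edge incident to $v_j$ is counted in exactly one $D(v_j[i])$; hence $\sum_{i:\,v_j\in\gV[i]}D(v_j[i])=D(v_j)$ and $\sum_i\vw^*_{ij}=1$ (this also uses $D(v_j)\ge 1$, i.e.\ no isolated nodes, so the weights are well defined, and $u\in\gV[i]$ whenever $v_j\in\gV[i]$ shares an edge, so the partition-level neighborhood is nonempty).

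Next I establish a forward-pass identity for the mean aggregator. Writing $\vm_j[i]=\sum_{u\in\mathcal{N}_{\gG[i]}(v_j)}\textsc{ReLU}(\mW\vx_u)$ for the unnormalized aggregate, we have $\bar{\vm}_j[i]:=\textsc{Mean}(\cdot)=\vm_j[i]/D(v_j[i])$, and since the per-partition edge sets partition $\gE$, $\sum_i\vm_j[i]=\vm_j:=\sum_{u\in\mathcal{N}_{\gG}(v_j)}\textsc{ReLU}(\mW\vx_u)$; therefore $\sum_i\vw^*_{ij}\,\bar{\vm}_j[i]=\sum_i\vm_j[i]/D(v_j)=\bar{\vm}_j$, the full-graph aggregate. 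This is the sense in which the degree weights are the ``right'' ones — they exactly reconstruct the full-graph aggregation, which is also what the optimization problem defining $\vw^*$ minimizes. I then invoke homophily to collapse the per-partition embeddings: under homophily all neighbors of $v_j$ carry (essentially) the same feature $\bar{\vx}_{(j)}$, so $\textsc{Mean}$ over any nonempty subset of $\mathcal{N}_\gG(v_j)$ equals $\textsc{ReLU}(\mW\bar{\vx}_{(j)})$, giving $\bar{\vm}_j[i]=\bar{\vm}_j$ for every partition containing $v_j$; since the self-feature $\vx_j$ is partition-independent, $\vh_j[i]=\mU\,\textsc{Concat}(\bar{\vm}_j[i],\vx_j)=\vh_j$.

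Now I match gradients and telescope. For the sigmoid head with binary cross-entropy the error signal is $\ve_j=\sigma(\vh_j)-y_j$, so $\nabla_{\mU}\ell(\vh_j[i],y_j)=\ve_j\,\textsc{Concat}(\bar{\vm}_j,\vx_j)^\top$, independent of $i$; for $\mW$, the chain rule through $\bar{\vm}_j[i]$ gives $\nabla_{\mW}\ell(\vh_j[i],y_j)=\frac{1}{D(v_j[i])}\sum_{u\in\mathcal{N}_{\gG[i]}(v_j)}\mathrm{diag}\big(\mathbb{I}[\mW\vx_u>0]\big)\,(\mU_1^\top\ve_j)\,\vx_u^\top$, where $\mU_1$ is the sub-block of $\mU$ acting on the aggregated part; under homophily every summand equals $\mathrm{diag}(\mathbb{I}[\mW\bar{\vx}_{(j)}>0])(\mU_1^\top\ve_j)\bar{\vx}_{(j)}^\top$, so the average over $\mathcal{N}_{\gG[i]}(v_j)$ coincides with the average over $\mathcal{N}_{\gG}(v_j)$ and equals $\nabla_{\mW}\ell(\vh_j,y_j)$. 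Hence $\nabla_\theta\ell(\vh_j[i],y_j)=\nabla_\theta\ell(\vh_j,y_j)$ for every $i$, and
\[
\sum_{i=1}^p\nabla_\theta\mathcal{L}(f_\theta,\gG[i])=\sum_{v_j\in\gV}\Big(\sum_{i:\,v_j\in\gV[i]}\vw^*_{ij}\Big)\nabla_\theta\ell(\vh_j,y_j)=\sum_{v_j\in\gV}\nabla_\theta\ell(\vh_j,y_j)=\nabla_\theta\mathcal{L}(f_\theta,\gG),
\]
which is the claim.

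I expect the main obstacle to be the two places where ``homophily'' does the heavy lifting. First, one must commit to a precise notion of homophily strong enough for $\bar{\vm}_j[i]=\bar{\vm}_j$ to hold exactly (e.g.\ neighbors of a node sharing a feature vector, or the aggregated message being neighbor-set independent); with only approximate homophily these become approximate identities, and the argument must instead bound the residual via Lipschitzness of $\sigma$ and norm bounds on $\mW,\mU$, turning the conclusion into an approximation guarantee. Second, $\textsc{ReLU}$ is non-differentiable, so the $\mW$-gradient step should be read with subgradients together with the observation that $\{\theta:\mW\vx_u=0\text{ for some }u\}$ has measure zero (or one replaces $\textsc{ReLU}$ by a smooth surrogate for the exact statement). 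Everything else — the convex-combination property of the weights and the aggregation identity — is exact bookkeeping enabled by the edge-disjointness of the Vertex Cut.
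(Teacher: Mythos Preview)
Your proposal is correct and shares the paper's skeleton: use homophily to align the per-partition output embeddings with the full-graph ones, then show the degree-ratio weights make the aggregated distributed gradient match the full-graph gradient. The one substantive difference is in how the $\mW$-gradient is reconciled. The paper argues edge by edge: along an edge $(k,j)$ the full-graph contribution carries a factor $1/D(v_j)$ while the partition-$i$ contribution carries $1/D(v_j[i])$; multiplying by $\vw_{ij}=D(v_j[i])/D(v_j)$ cancels this exactly, and summing over $i$ (each edge in a unique partition) reproduces the full-graph sum. This only needs the weak consequence $\vh_j[i]\approx\vh_j$ of homophily to match the error signal; the per-neighbor terms $\mathbb{I}(\mW\vx_u>0)\,\vx_u$ may differ across neighbors and still telescope correctly. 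By contrast, your route invokes the stronger reading of homophily (all neighbors of $v_j$ share a feature vector) to force $\nabla_\theta\ell(\vh_j[i],y_j)$ to be the same for every $i$, and then finishes with $\sum_i\vw_{ij}=1$. That works, but under that hypothesis \emph{any} convex combination would do, so the argument does not isolate why $D(v_j[i])/D(v_j)$ is the distinguished choice; the paper's cancellation-and-telescope does. Since you already wrote down the exact forward-pass identity $\sum_i\vw^*_{ij}\,\bar\vm_j[i]=\bar\vm_j$, the cleanest fix is to mirror it on the backward pass: keep only $\vh_j[i]\approx\vh_j$ from homophily, push the weight inside to cancel $1/D(v_j[i])$, and let edge-disjointness telescope the neighbor sums.
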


We include a detailed proof in Section~\ref{app:proof}. 

By incorporating these optimal weights and optimizing the loss function in Equation~\ref{eq:reweight}, the model takes into account the discrepancies between node replication factors and can be more accurately trained on Vertex Cut partitions. 
Thus, we effectively address the imbalanced replication factor in Vertex Cut partitioning, while eliminating the need for communications between GPUs. For multi-layer GNNs, we found that the optimal weights in the above formulation work well empirically (see results in Section~\ref{sec:experiments}).

\subsection{Further Optimizations for Speed}
\label{sec:expedit}

We further demonstrate the flexibility of \ours{} in combining other graph training techniques. 

One such technique is DropEdge~\citep{rongdropedge}, which alleviates the bias caused by graph partitioning according to the following theorem:


\begin{theorem}
DropEdge effectively alleviates the bias resulting from graph partitioning and introduces an additional regularization term in the distributed optimization process.
\end{theorem}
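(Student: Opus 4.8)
The plan is to formalize the somewhat vague statement ``DropEdge alleviates the bias resulting from graph partitioning and introduces an additional regularization term'' into a precise claim about the expected reweighted loss under DropEdge, and then prove it by taking expectations over the random edge-dropping. First I would fix notation: on each Vertex Cut partition $\gG[i]$, DropEdge (or the modified \drop{} variant) retains each edge independently with probability $q$, producing a random subgraph $\tilde{\gG}[i]$ with random neighborhoods $\tilde{\mathcal{N}}_{\gG[i]}(v)$ and random degrees $\tilde D(v_j[i])$. The training objective under DropEdge on partition $i$ is the random quantity $\mathcal{L}(f_\theta, \tilde{\gG}[i])$, and the quantity of interest is $\E[\mathcal{L}(f_\theta, \tilde{\gG}[i])]$ together with the deviation $\mathcal{L}(f_\theta, \tilde{\gG}[i]) - \E[\mathcal{L}(f_\theta, \tilde{\gG}[i])]$, which plays the role of an implicit (zero-mean) stochastic regularizer in the SGD dynamics.

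The key steps, in order: (1) Condition on the random edge set and compute, for a single GraphSAGE layer with a mean aggregator, the conditional node embedding $\tilde{\vh}_v$ as a function of the surviving neighbors; using linearity of the message function inside the aggregator and the fact that under the degree-reweighting of Theorem~4 the contribution of each node is normalized by its true degree, argue that $\E[\,\tilde{\vh}_v\,]$ equals (or, to first order, approximates) the full-graph embedding $\vh_v$ — this is where the homophily assumption from the previous theorem is reused, so that dropping a homophilic neighbor does not change the aggregated signal in expectation. (2) Push the expectation through the (smooth) sigmoid-plus-BCE loss: since $\ell$ is not linear, invoke a first-order Taylor / Jensen-type argument to write $\E[\ell(\tilde{\vh}_j, y_j)] = \ell(\vh_j, y_j) + \tfrac12 \Var(\tilde{\vh}_j)\cdot \ell''(\cdot) + \text{h.o.t.}$, identifying the variance term (proportional to $(1-q)/q$ and to the neighborhood heterogeneity) as the explicit additional regularization term. (3) Combine with Theorem~4: summing the reweighted DropEdge losses over all $p$ partitions and taking expectation recovers the full-graph ERM objective plus this variance penalty, so in expectation the partitioning bias is unchanged (still corrected by DAR) while a new data-dependent regularizer appears — which is exactly the two-part assertion. (4) Note the gradient-level analogue: $\nabla_\theta \E[\mathcal{L}] = \nabla_\theta \mathcal{L}_{\text{full}} + \nabla_\theta \mathcal{R}$, and the mean-zero fluctuation around it is the stochastic-gradient-noise form of regularization familiar from dropout analyses.

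I would carry out step (1) by exploiting that the mean aggregator over a random subset of an i.i.d.-kept neighbor set is an unbiased estimator of the empirical neighbor mean (a standard sampling-without-replacement / Bernoulli-mask fact), so the only obstacle is the normalization: a naive mean over surviving neighbors divides by the random surviving degree, not the true degree, and $\E[1/\tilde D]\neq 1/\E[\tilde D]$. The cleanest route is to observe that the DAR reweighting $D(v_j[i])/D(v_j)$ interacts with DropEdge so that the effective per-edge weight is a constant in expectation; alternatively one assumes the ``modified DropEdge'' (\drop{}) keeps exactly a fixed number $K$ of neighbors, making the aggregator an exact uniform subsample and the unbiasedness exact. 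I expect step (2) — rigorously controlling the Taylor remainder for the sigmoid/BCE composition and showing the second-order term genuinely behaves as a regularizer rather than an uncontrolled error — to be the main obstacle, since it requires either boundedness of the pre-activations (Lipschitz/bounded-feature assumptions) or restricting to the small-$\Var$ regime. I would handle it by assuming bounded node features and weights (so pre-activations lie in a compact set on which $\ell$ has bounded derivatives), which makes the remainder $O(\Var^2)$ and legitimizes calling the $O(\Var)$ term ``an additional regularization term.''
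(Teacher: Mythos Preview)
Your high-level strategy --- Taylor-expand the DropEdge-perturbed loss around the unperturbed messages and identify the first-order term as vanishing (bias-free in expectation) and the second-order term as the regularizer --- matches the paper's, but your modeling of the perturbation differs in a way that makes your route unnecessarily hard. The paper does not go through the aggregator mechanics at all: it models DropEdge directly as a dropout-style multiplicative mask on each message $\vm_j^{l}=\textsc{Msg}^{(l)}(\vh_j^{(l)})$, writing the perturbed message as $(1+\eta)\vm_j^{l}$ for a scalar $\eta$ that is $-1$ with probability $p$ and (after the usual $1/(1-p)$ rescaling) has mean zero. Then $\delta_j^{l}\triangleq\eta\,\vm_j^{l}$ is a zero-mean disturbance, and a second-order Taylor expansion of $\mathcal{L}$ in $\delta_j^{l}$ (justified by appealing to \citet{wei2020implicit} when $W\vm_j^{l}\approx 0$) gives $\E[\mathcal{L}(\vm_j^{l}+\delta_j^{l})-\mathcal{L}(\vm_j^{l})]\approx \tfrac12\,\E[{\delta_j^{l}}^{\top} D^2_{\vm_j^{l}}\mathcal{L}\,\delta_j^{l}]$, since the linear term has zero expectation. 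That is the entire argument.

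By contrast, you treat DropEdge as random neighbor subsampling and then have to confront the $\E[1/\tilde D]\neq 1/\E[\tilde D]$ issue for the mean aggregator, which forces you either to invoke DAR in a delicate way or to switch to a fixed-$K$ subsample. The paper's message-level multiplicative-noise viewpoint sidesteps this entirely, because the rescaled Bernoulli mask already has the right mean at the message level --- no normalization headaches, no homophily reuse, no step (3). What your route buys is a more explicit link between the ``bias alleviation'' claim and the DAR reweighting of Theorem~4; the paper's proof is terse on that point and essentially reads ``first-order term vanishes $\Rightarrow$ unbiased,'' whereas you actually try to show the expected reweighted DropEdge objective equals the full-graph ERM plus a penalty. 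Your concern about controlling the Taylor remainder (your step (2)) is legitimate and the paper handles it only by citation, so on rigor you are ahead; but if you want the short proof, abandon the aggregator-level analysis and adopt the zero-mean multiplicative-perturbation framing.
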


Nonetheless, directly applying DropEdge to large graph partitions may adversely affect the runtime, as the edge sampling operation can occasionally take more time than the backward propagation itself. To tackle this challenge, we suggest an alternative approach called \drop{}. The concept behind \drop{} is to preprocess $K$ DropEdge masks and then, during each iteration, randomly select and apply one of these masks.

By using the \drop{} approach, we efficiently address the runtime issue associated with naively implementing DropEdge on large graph partitions. This method allows for faster training, as the masks are preprocessed and readily available for use in each iteration. Consequently, the overall runtime is reduced, and the training process becomes more efficient.

\begin{algorithm}[h]
    \caption{Training a GNN with \ours{}}
    \label{alg:ours}
    \begin{algorithmic}[1]
    \REQUIRE A GNN $f$ parameterized by $\theta$. Full graph $\gG=\{\gV, \gE\}$. Number of partitions $p$. Learning rate $\lambda$. A hyperparameter \textsc{use\_DropEdge} to enable DropEdge-$K$.
    \STATE Partition $\gG$ into $\gG[1], ..., \gG[p]$ via Vertex Cut
    \STATE Assign each partition to a GPU node
    \STATE \texttt{// for the $i$-th partition (per partition view)}
    \STATE $\vw^*_i\leftarrow \{j: D(v_j[i]) / D(v_j)\mid v_j \in \mathcal{V}[i]\}$
    \STATE $\{M_i^{(k)}\}_{k=1}^K\leftarrow $ Preprocessed DropEdge masks on $\gE[i]$
    \WHILE{not converge}
    \IF{\textsc{use\_DropEdge}}
    \STATE Randomly select one of $\{M_i^{(k)}\}_{k=1}^K$ and apply the mask on $\gE[i]$
    \ENDIF
    \STATE Compute $\mathcal{L}(f_\theta, \gG[i])$ via Equation~\ref{eq:reweight}
    \STATE Update model parameters: $\theta\leftarrow\theta - \lambda \nabla_{\theta}\mathcal{L}(f_\theta, \gG[i])$
    \ENDWHILE
    \end{algorithmic}
\end{algorithm}

\section{Experiments}
\label{sec:experiments}
\subsection{Experimental Setup}

\xhdr{Datasets} 
We evaluate our proposed model using four diverse datasets: Reddit~\citep{hamilton2017inductive}, Yelp~\citep{zenggraphsaint}, ogbn-products, and ogbn-papers100M~\citep{hu2020open}, each with distinct characteristics.

\textbf{Reddit} contains user-generated content for natural language processing and sentiment analysis tasks. It has 233 thousand nodes (posts) and 114 million edges (interactions).
\textbf{Yelp} consists of business reviews and ratings for sentiment analysis and recommendation systems. The dataset includes approximately 716 thousand nodes (reviews) and 7 million edges (business-user relationships).
\textbf{ogbn-products} represents a graph of Amazon products connected by co-purchasing links for node classification and link prediction tasks. It contains over 2 million nodes (products) and 62 million edges (co-purchases).
\textbf{ogbn-papers100M} includes scientific papers and their citation relationships for node classification and citation prediction tasks. The dataset features over 111 million nodes (papers) and 1.6 billion edges (citations).
For all datasets, the evaluation is performed using the node classification task. The Reddit, ogbn-products, and ogbn-papers100M datasets are assessed using the accuracy metric following the previous works~\cite{hu2020open,chenfastgcn}, while the Yelp dataset is evaluated based on the micro-F1 score following GraphSAINT~\cite{zenggraphsaint}.

\xhdr{Baselines} To demonstrate the effectiveness and improvement on training speed of \ours{} in handling large-scale graph data, we compare its performance with two classes of methods:
\begin{itemize}[leftmargin=*]
    \item \textbf{Sampling-based baselines}: \textbf{GraphSAGE}, \textbf{Cluster-GCN}, and \textbf{GraphSAINT}. These methods offer different approaches to node sampling and aggregation for training GNNs on large-scale graphs. GraphSAGE uses neighborhood sampling to construct several sub-graphs to formulate a batch. Cluster-GCN partitions the graph into clusters to reduce the computational complexity. At the training phase, each batch is formulated by a random set of subgraphs. GraphSAINT employs a normalization strategy to improve the robustness of the training process. 
    \item \textbf{Distributed GNN training frameworks}:
    \textbf{DistDGL}, \textbf{PipeGCN}, and \textbf{BNS-GCN}. DistDGL is a distributed implementation of the Deep Graph Library (DGL) that scales GNNs across multiple GPUs and machines with a balanced edge min-cut partitioning algorithm. PipeGCN proposes to hide communication overhead by pipelining inter-partition communication with intra-partition communication, while BNS-GCN proposes boundary sampling to significantly reduce communication cost.
\end{itemize}

\xhdr{Other Settings}  We implemented \ours{} with PyTorch~\citep{paszke2017automatic} and DGL~\citep{wang2019deep}. We conduct the experiments with NVIDIA A100 GPUs, each has 80GB HBM2e Memory, two 64-thread CPUs (AMD EPYC 7543 32-Core Processor) and 2.0T DDR4 Memory. The intra-server communication (CPU-GPU and GPU-GPU) is based on PCIe 4.0 lanes. 
If not explicitly mentioned, we by default use NE~\citep{NE} algorithm for Vertex Cut. We use 10 masks for \drop{} and the DropEdge ratio is set to 0.5. We will make source code public at the time of publication.
Please refer to additional implementation details in Appendix~\ref{app:implementation}.

\subsection{Empirical Comparisons}
\label{sec:empirical_comparisons}
Table~\ref{tab:main_runtime} illustrates the per-iteration runtime of our model on three separate datasets: Reddit, ogbn-products, and Yelp. Concurrently, Table~\ref{tab:main_acc} provides a comprehensive overview of the final test accuracy. Generally, methods based on sampling tend to yield lower final accuracy.
Specifically, DistDGL operates on multiple GPUs, but within each GPU, it continues to use several samplers and train sampled subgraphs, which introduces additional runtime overhead.
In a broader context, \ours{} habitually matches or exceeds the accuracy of others, while boasting a substantial throughput increase of \shir{1.7x to 16.8x} against BNS-GCN. The throughput of \ours{} escalates more effectively as we increase the number of partitions, largely due to the elimination of embedding communication during the forward phase. The only synchronizations needed across GPUs are gradients of the weights, which are considerably smaller than the node features, especially for large graphs.

Additionally, by incorporating the \drop{} method into \ours{}, computation can be further accelerated. Among the three datasets, the Reddit dataset exhibits denser edge connections, resulting in greater computational efficiency compared to the other datasets.

\begin{table}[t]
\centering
\caption{The average runtime (\textit{milliseconds}) per iteration and the standard deviation across 10 trials on Reddit, ogbn-products, and Yelp datasets. Time Reduced Factor indicates the range of improvement on runtime compared with the baselines. \ours{} 
 improves runtime without sacrificing test accuracy (also see Figure~\ref{fig:converge_reddit}) compared to the full-graph training paradigm.  
 }
\label{tab:main_runtime}
\resizebox{1\textwidth}{!}{
\begin{tabular}{l|rrrrrr}
\toprule
& \multicolumn{2}{c}{Reddit} & \multicolumn{2}{c}{ogbn-products} & \multicolumn{2}{c}{Yelp} \\
\#Partitions/GPUs  & 2            & 4           & 5               & 10              & 3           & 6          \\
\midrule
DistDGL     &    863.9\std{37.3}   & 746.9\std{42.7}      &                2879.5\std{78.1}   & 2418\std{81.6}               &       2143.7\std{53.9}   &      1893.6\std{47.4}        \\
PipeGCN     &  316.8\std{3.2}           &  393.7\std{3.3}           &   851.8\std{2.4}      &   771.6\std{1.8}             &     356.0\std{3.6}       &     358.6\std{2.4}     \\
BNS-GCN     &   314.0\std{1.9}         &     305.3\std{2.6}      &       740.9\std{3.9}        &    717.6\std{4.1}      &   569.1\std{1.7}        &   551.3\std{3.2}  \\
\ours{}    &   168.8\std{1.3}           &     96.3\std{1.7}        &        72.0\std{0.8}       &        47.4\std{0.8}           &   211.2\std{2.1}         &   143.4\std{1.6}   \\
\ours{}+\drop{}  &  116.5\std{2.2}  & 76.6\std{1.3} & 65.3\std{0.7} & 42.7\std{0.9} & 205.3\std{1.6} & 137.1\std{1.5} \\
\midrule
Time Reduced Factor& $2.7\sim7.4$ &$4.0\sim9.8$ & $11.3\sim44.1$& $16.8\sim56.6$&$1.7\sim10.5$&$2.6\sim 13.8$\\
\bottomrule
\end{tabular}}
\end{table}

\begin{table}[t]
\centering
\caption{The average accuracy and standard deviation across 10 trials for test performance on Reddit, ogbn-products, and Yelp datasets. \ours{} does not sacrifice test accuracy compared to the full-graph training paradigm with partitioned graphs.}
\label{tab:main_acc}
\resizebox{1\textwidth}{!}{
\begin{tabular}{l|rr|rr|rr}
\toprule
& \multicolumn{2}{c|}{Reddit} & \multicolumn{2}{c|}{ogbn-products} & \multicolumn{2}{c}{Yelp} \\
\midrule
GraphSAGE   &            \multicolumn{2}{c|}{95.67\std{0.03}}            &               \multicolumn{2}{c|}{78.81\std{0.32}}              &          \multicolumn{2}{c}{63.51\std{0.39}}          \\
Cluster-GCN &           \multicolumn{2}{c|}{95.92\std{0.05}}               &                \multicolumn{2}{c|}{78.91\std{0.40}}                   &             \multicolumn{2}{c}{61.32\std{0.47}}              \\
GraphSAINT  &           \multicolumn{2}{c|}{96.63\std{0.04}}           &              \multicolumn{2}{c|}{79.11\std{0.29}}              &              \multicolumn{2}{c}{65.30\std{0.31}}        \\
\midrule
\midrule
\#Partitions  & 2            & 4           & 5               & 10              & 3           & 6          \\
\midrule
DistDGL     &     96.51\std{0.02} &      96.28\std{0.02}     &     79.11\std{0.39} & 79.01\std{0.46}            &     65.06\std{0.05}  & 65.02\std{0.04}          \\
PipeGCN     &  97.12\std{0.02}            &    97.04\std{0.03}           &      79.06\std{0.42}             &        78.91\std{0.65}         &          65.27\std{0.01}   &       65.24\std{0.02}      \\
BNS-GCN     &   97.14\std{0.02}           &     97.12\std{0.03}        &       79.42\std{0.17}          &          79.27\std{0.23}      &  65.27\std{0.02}           &  65.30\std{0.02}   \\
\ours{}    &   97.12\std{0.02}           &    97.10\std{0.02}         &         80.28\std{0.04}        &   80.40\std{0.03}               &      65.28\std{0.01}       & 65.34\std{0.02}    \\
\ours{}+\drop{} & 97.14\std{0.02} & 97.11\std{0.02} & 80.31\std{0.03} & 80.43\std{0.02} & 65.33\std{0.01}  & 65.36\std{0.02} \\
\bottomrule
\end{tabular}}
\end{table}

We assess \shir{our method on a multi-node training scenario} by examining its performance on the ogbn-papers100M dataset.
The original graph is divided into 192 segments, and the training is performed on three machines, each equipped with eight GPUs. A summary of the results can be found in Figure~\ref{fig:papers100m}. Our findings demonstrate that, in contrast to previous state-of-the-art methods like PipeGCN and BNS-GCN, that spend most of the training time on communication, \ours{} manages to decrease the training time per iteration by tenfold.

\subsection{Ablation Studies}
\xhdr{Scaling the number of partitions} In Section~\ref{sec:empirical_comparisons}, we demonstrated the training speed of \ours{} scales up well. 
Here we further show the scalability of our model by assessing the training time for each epoch while adjusting the partition quantity. The corresponding results are depicted in Figure~\ref{fig:scalability}. The figure indicates that by doubling the number of partitions, the training speed nearly doubles across all three datasets.
In addition, we demonstrate that \ours{} can maintain a performance level that is comparable to the full graph paradigm as the number of partitions increases. To validate this point, we gradually increased the number of partitions to 256 (see Figure~\ref{fig:partitions} in Appendix). These experiments were carried out in a simulated environment by accumulating gradients.
The findings indicate that our method consistently sustains the training accuracy, even when the number of partitions is increased to a substantial number like 256.

\xhdr{Empirical analysis on convergence} To study the effect on convergence of the proposed \ours{} and original full graph training, we visualize training/validation curve per epoch in Figure~\ref{fig:converge_reddit}. We show that the convergence speed in terms of epochs of \ours{} is quite similar to the full graph training paradigm.

\xhdr{Ablation study on the reweighting schema} In order to assess the efficacy of our suggested DAR scheme, we evaluated it against two basic alternatives: (1) none, which implies no node-wise reweighting is performed, and (2) vanilla-inv, which implies each node is weighted by the inverse of its replication factor $1 / {\text{RF}(v_j)}$, without taking into account edges. As demonstrated by the outcomes of the ablation study displayed in Table~\ref{tab:reweight}, the DAR scheme exhibits greater accuracy compared to its less complex counterparts.

\xhdr{Ablation study on partition algorithms} Our full pipeline \ours{} is agnostic about specific partition algorithms. We incorporated an ablation study focusing on various graph partition algorithms.
\shir{Due to space limit, we place the results in Table~\ref{tab:partitionchoice} (Appendix \ref{app:exp}) and discuss the main findings here. 
Our observations reveal that the Edge Cut partition (METIS) falls short in performance compared to Vertex Cut partitions. The differences in performance among other partitioning methods are fairly slight across various partitioning methods and datasets.}

\begin{figure}[t]
\begin{minipage}[b]{0.32\linewidth}
\centering
\includegraphics[width=\textwidth]{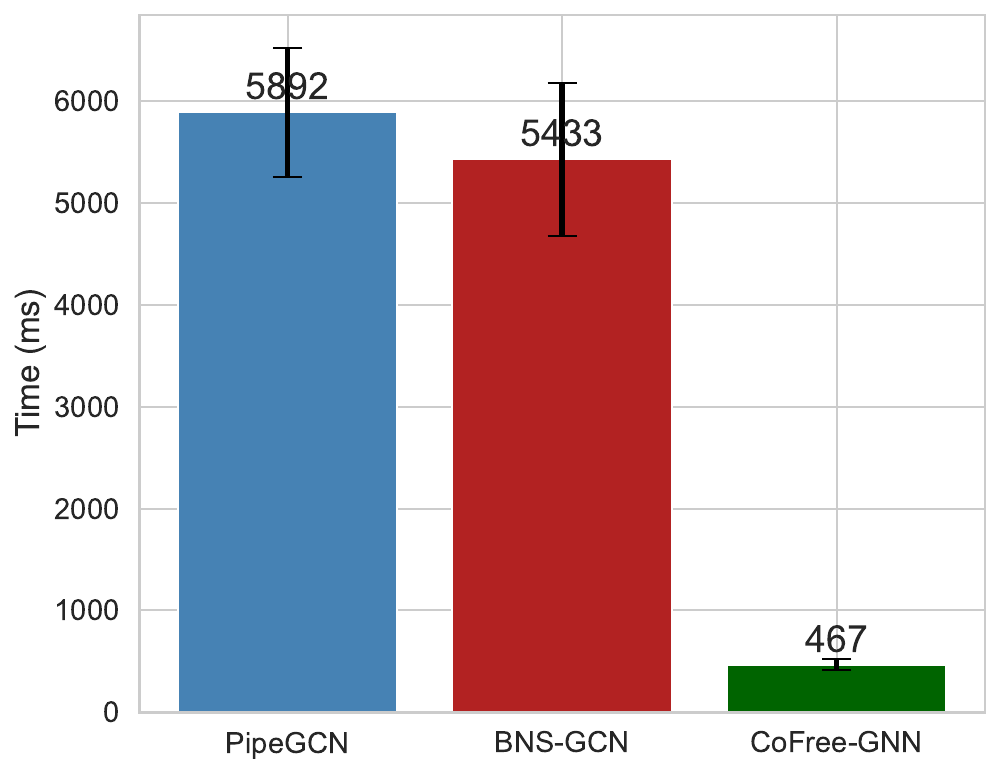}
\caption{The average runtime per iteration on ogbn-papers100M dataset under multi-node training scenario. \ours{} manages to decrease the training time by tenfold.}
\label{fig:papers100m}
\end{minipage}
\hspace{0.1cm}
\begin{minipage}[b]{0.32\linewidth}
\centering
\includegraphics[width=\textwidth]{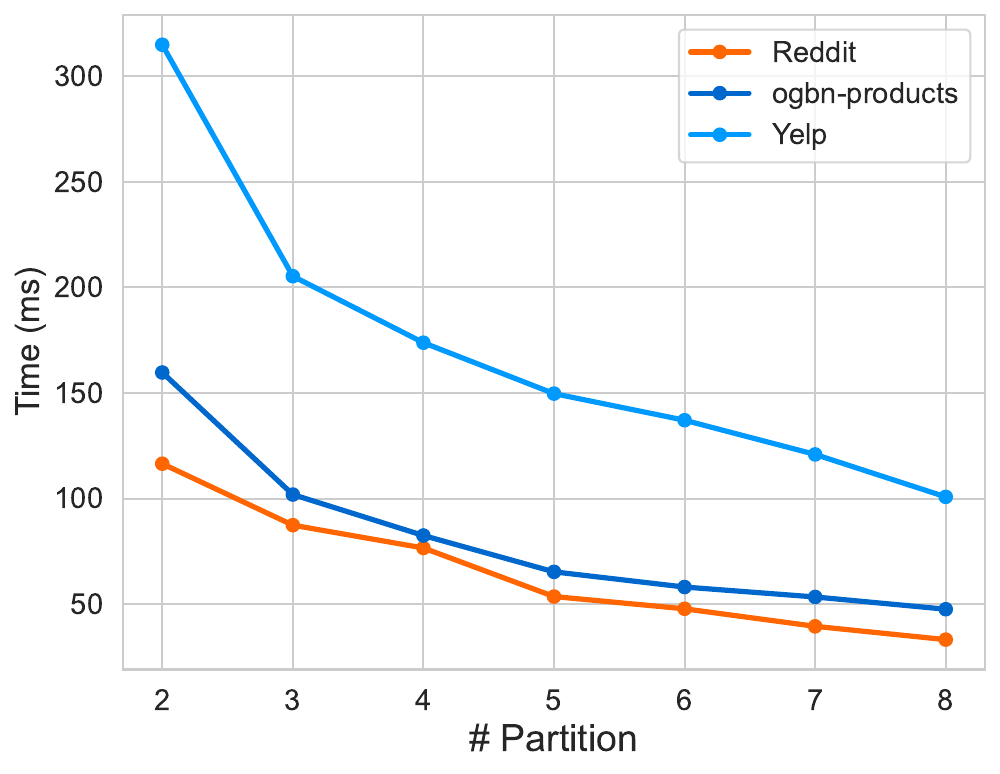}
\caption{The effect of scaling number of partitions on training time per iteration. By doubling the number of partitions, the training speed nearly doubles across all three datasets.}
\label{fig:scalability}
\end{minipage}
\hspace{0.1cm}
\begin{minipage}[b]{0.32\linewidth}
\centering
\includegraphics[width=\textwidth]{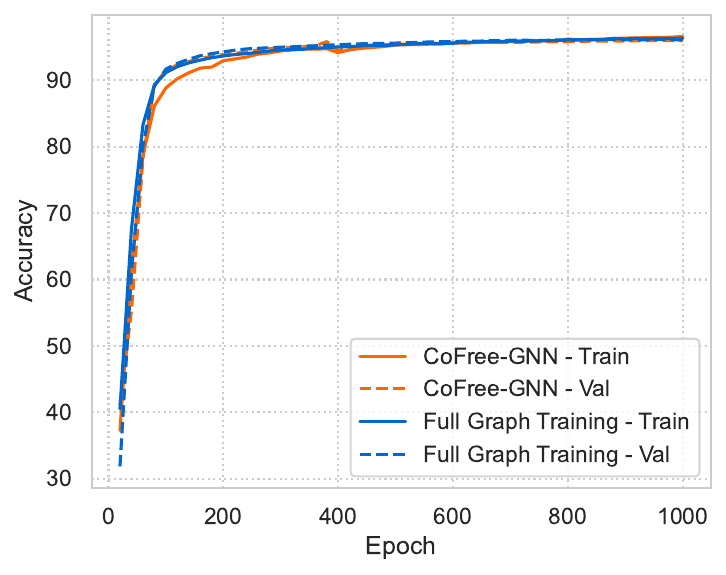}
\caption{Training curves of \ours{} and full graph training on Reddit dataset. The convergence speed \textit{v.s.} epochs of \ours{} is similar to the full graph training paradigm.}
\label{fig:converge_reddit}
\end{minipage}
\end{figure}

\begin{table}[t]
\centering
\caption{Ablation study on reweighting schema: The average accuracy and standard deviation across 10 trials on 256 partitions' setting. DAR outperforms other alternatives in terms of final test accuracy.}
\label{tab:reweight}
\begin{tabular}{l|ccc}
\toprule
& Reddit &  ogbn-products & Yelp \\
             \midrule
    w/o reweighting          &   96.20\std{0.02}        &     78.45\std{0.04}            &   64.76\std{0.02}              \\
    vanilla-inv          &   96.83\std{0.02}        &     79.27\std{0.04}            &   64.96\std{0.02}              \\
     DAR      &     97.08\std{0.02}      &          80.26\std{0.03}                & 65.39\std{0.02}  \\
\bottomrule
\end{tabular}
\end{table}

\section{Conclusion}

In this study, we addressed the challenge of training Graph Neural Networks (GNNs) on large graphs by developing \ours{}, a fully communication-free distributed GNN training pipeline. \ours{} uses the Vertex Cut graph partitioning rather than the traditional Edge Cut partitioning, which reduces information loss and enables communication-free training.
Our method is enhanced by reweighting gradients to minimize the impact of uneven node duplication and by using a modified DropEdge method to further accelerate the training process.
Evaluations on multiple datasets demonstrate \ours{}'s superiority in training efficiency and accuracy, achieving up to a 10-fold reduction in training time while maintaining test performance. Our communication-free approach also \shir{maintains training speed} as the number of partitions increases.

\shir{Moreover, we discuss the limitations of our work in Appendix \ref{app:limitations}, such as increased memory usage due to the duplication of node information across partitions.}
Further research could extend our proposed methodology to a wider range of graph-based tasks and explore new techniques to reduce computational overhead. Overall, the findings from our study provide a promising direction for achieving higher training speed in GNNs, and we anticipate that our work will spur further advancements in this field.

\section*{Acknowledgements}
KC and JL acknowledge the support of
DARPA under Nos. HR00112190039 (TAMI), N660011924033 (MCS);
ARO under Nos. W911NF-16-1-0342 (MURI), W911NF-16-1-0171 (DURIP);
NSF under Nos. OAC-1835598 (CINES), OAC-1934578 (HDR), CCF-1918940 (Expeditions), 
NIH under No. 3U54HG010426-04S1 (HuBMAP),
Stanford Data Science Initiative, 
Wu Tsai Neurosciences Institute,
Amazon, Docomo, GSK, Hitachi, Intel, JPMorgan Chase, Juniper Networks, KDDI, NEC, and Toshiba.
The content is solely the responsibility of the authors and does not necessarily represent the official views of the funding entities.

\bibliography{main}
\bibliographystyle{plainnat}

\newpage
\appendix

\section{Missing Proofs and Derivations}
\label{app:proof}
\begin{theorem}
Given an Edge Cut with $H$ halo nodes, any Vertex Cut that adheres to the same partition boundary will exhibit a number of duplicated nodes that is strictly less than $H$. 
\end{theorem}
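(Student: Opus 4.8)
The plan is to compare the total number of node copies stored by the two schemes. Fix the Edge Cut node partition $\gV = \gV[1] \cup \cdots \cup \gV[p]$ (pairwise disjoint), write $\mathrm{home}(v)$ for the index $i$ with $v \in \gV[i]$, call an edge \emph{internal} if both endpoints lie in one $\gV[i]$ and \emph{cross} otherwise, and let $\gH[i]$ denote the halo set of partition $i$, i.e.\ the set of nodes outside $\gV[i]$ that are adjacent to some node of $\gV[i]$, so that $H = \sum_i |\gH[i]|$ and the Edge-Cut-plus-halo scheme stores $\sum_i (|\gV[i]| + |\gH[i]|) = |\gV| + H$ node copies in total. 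I read ``adheres to the same partition boundary'' as: every internal edge of $\gV[i]$ stays in $\gE[i]$, and each cross edge $\{v,u\}$ with $v \in \gV[i], u \in \gV[j]$ is routed to exactly one of $\gE[i], \gE[j]$; then the node set of partition $i$ in the Vertex Cut is $\gV'[i] = \{\text{endpoints of edges in }\gE[i]\}$, and the number of duplicated nodes is $N := \sum_i |\gV'[i]| - |\gV|$. I assume $H \ge 1$, since otherwise there are no cross edges and the claim is degenerate.

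First I would establish the non-strict bound $N \le H$: every node of $\gV'[i]$ is either a member of $\gV[i]$ or the far endpoint of a cross edge routed to $\gE[i]$, and in the latter case it is adjacent to $\gV[i]$, hence lies in $\gH[i]$; thus $\gV'[i] \subseteq \gV[i] \cup \gH[i]$, so $\sum_i |\gV'[i]| \le |\gV| + H$ and $N \le H$.

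For strictness, the key is an asymmetry of cross edges: routing $\{v,u\}$ (with $v \in \gV[i], u \in \gV[j]$) to $\gE[i]$ pulls only $u$ into $\gV'[i]$ and contributes nothing toward pulling $v$ into $\gV'[j]$, even though $v \in \gH[j]$. To make this quantitative I would exhibit a concrete adherent Vertex Cut that already beats $H$: order the partitions and route every cross edge to the lower-indexed of the two partitions holding its endpoints. Then a node $u$ can enter $\gV'[i]$ with $i \ne \mathrm{home}(u)$ only when $i < \mathrm{home}(u)$, so every halo membership ``$v \in \gH[j]$'' with $j > \mathrm{home}(v)$ is unrealized; at least one such membership exists (the upward end of any cross edge), which yields $\sum_i |\gV'[i]| \le |\gV| + (H-1)$, i.e.\ $N \le H-1 < H$. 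A charging argument is also available --- each cross edge is responsible for at most one cross-partition copy, so $N$ is at most the number of cross edges --- but this alone only recovers $N \le H$ in the worst case, so the explicit lower-index routing is what supplies the strict ``$-1$''.

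The hard part is the universal quantifier ``any Vertex Cut that adheres to the same partition boundary'': an adherent routing is free to push the far endpoint of essentially every boundary edge onto both sides, and such a wasteful partition can attain $N = H$ exactly (for instance, $K_4$ split into two pairs has $H = 4$ and admits a routing in which all four nodes are replicated, so $N = 4$). Hence a fully rigorous ``strictly less than $H$'' needs the notion of adherence to forbid gratuitous replication, or the statement must be read as concerning the replication that a replication-minimizing Vertex Cut actually incurs --- which is the natural reading, since minimizing the replication factor is the declared objective of Vertex Cut. Under that reading the proof is exactly the two steps above, with the lower-index routing providing the strict inequality; the only remaining bookkeeping is checking that home-node copies are never double-counted, which is already absorbed by the containment $\gV'[i] \subseteq \gV[i] \cup \gH[i]$.
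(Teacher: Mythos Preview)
Your core construction---route each cross edge to one of the two incident partitions so that only one endpoint is duplicated instead of both---is exactly the idea in the paper's proof, which is only a few sentences long: it starts from the Edge Cut\,+\,halo configuration, observes that one can delete at least one halo copy (reassigning its boundary edges to the other side), and declares the resulting Vertex Cut strictly smaller. So at the level of mechanism the two arguments coincide.

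Where you go further is in rigor and in diagnosing the quantifier. You set up the containment $\gV'[i] \subseteq \gV[i] \cup \gH[i]$ to get $N \le H$ first, then exhibit the lower-index routing to force $N \le H-1$; the paper does neither step explicitly. More importantly, your $K_4$ counterexample is correct and shows that the literal ``any Vertex Cut'' statement is false: an adherent routing can realise every halo and hit $N = H$. The paper's proof does not address this and, like yours, only establishes the existential claim (some adherent Vertex Cut has $N < H$), so the gap between statement and proof is present in the paper as well. Your suggested resolution---read the theorem as comparing $H$ against the replication-minimizing adherent Vertex Cut---is the natural one given that minimizing the replication factor is the stated objective, and under that reading your argument is complete.
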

\begin{proof}
Let the selected edges for the Edge Cut partition be represented by $\gE' \subset \gE$. Consequently, the set of duplicated nodes corresponds to the vertices in $\gE'$, denoted as $\gV'$. The total count of halo nodes generated by the Edge Cut partition is given by $|\gV'|$. To transition from an Edge Cut partition to a Vertex Cut partition, we can randomly eliminate nodes from $\gV'$ along with their connected edges and assign the edge to the node left. It is evident that at least one node must be removed, allowing for a Vertex Cut with a reduced quantity of duplicated nodes.
\end{proof}

\begin{theorem}
Let use consider a graph $\gG$ with no isolated node, where the degree of each node $D(v_j)$ in the graph $\gG$ follows a power-law degree distribution. Given that we divide the original graph into $p$ partitions, the replication factor of node $v_j$, \ie, $\text{RF}(v_j)$ presents an imbalance ratio greater than or equal to:
$$\frac{1 - (1 - \frac{1}{p})^{\max_j D(v_j)}}{1 - (1 - \frac{1}{p})^{\min_j D(v_j)}}$$
Here, $\max_j D(v_j)$ and $\min_j D(v_j)$ denote the maximum and minimum node degrees in graph $\gG$, respectively. 
\end{theorem}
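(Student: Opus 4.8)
The plan is to compute the \emph{expected} replication factor of each node as a function of its degree under a balanced (uniform) assignment of edges to parts, observe that this function is strictly monotone, and then read off the claimed ratio from its two extreme values. The structural fact that drives everything is that in a Vertex Cut every edge of $\gG$ lies in exactly one part $\gE[i]$, and $v_j$ is duplicated into part $i$ if and only if at least one of its $D(v_j)$ incident edges lands in $\gE[i]$; hence $\text{RF}(v_j)$ equals the number of distinct parts hit by the edges incident to $v_j$.

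Modeling the edge-to-part assignment as uniform (each edge independently placed in part $i$ with probability $1/p$), the event ``part $i$ receives none of $v_j$'s incident edges'' has probability $(1-\tfrac{1}{p})^{D(v_j)}$, so $\Prob[v_j \in \gV[i]] = 1 - (1-\tfrac{1}{p})^{D(v_j)}$. Summing over the $p$ parts and using linearity of expectation gives
\[
\E[\text{RF}(v_j)] = \phi(D(v_j)), \qquad \phi(d) := p\left(1 - \left(1-\tfrac{1}{p}\right)^{d}\right).
\]
Because $1 - \tfrac{1}{p} \in (0,1)$, the map $d \mapsto (1-\tfrac{1}{p})^{d}$ is strictly decreasing, so $\phi$ is strictly increasing on the positive integers; consequently $\max_j \E[\text{RF}(v_j)] = \phi(\max_j D(v_j))$ and $\min_j \E[\text{RF}(v_j)] = \phi(\min_j D(v_j))$. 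The ``no isolated node'' hypothesis ensures $\min_j D(v_j) \ge 1$, so $\phi(\min_j D(v_j)) > 0$ and the quotient
\[
\frac{\phi(\max_j D(v_j))}{\phi(\min_j D(v_j))} = \frac{1 - (1-\tfrac{1}{p})^{\max_j D(v_j)}}{1 - (1-\tfrac{1}{p})^{\min_j D(v_j)}}
\]
is well defined, the factor $p$ cancelling between numerator and denominator. The ``$\ge$'' rather than equality is obtained because the true imbalance dominates this ratio of expectations: $\E[\max_j \text{RF}(v_j)] \ge \max_j \E[\text{RF}(v_j)]$ while $\E[\min_j \text{RF}(v_j)] \le \min_j \E[\text{RF}(v_j)]$, so $\E[\max_j \text{RF}(v_j)] / \E[\min_j \text{RF}(v_j)]$ is at least the displayed expression.

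I expect the main obstacle to be conceptual rather than computational: making precise what ``imbalance ratio'' means and which assignment model is intended. If the statement concerns a uniformly random balanced Vertex Cut, the displays above are essentially the whole proof; if it concerns an arbitrary partition, one needs the max/min-versus-expectation argument just sketched to upgrade equality to ``$\ge$''. A secondary point is the precise role of the power-law assumption — it is not needed for the algebra, but it is what forces $\min_j D(v_j) = 1$ and makes $\max_j D(v_j)$ large in realistic graphs, so that $\phi(\max_j D(v_j))$ is close to $p$ and the claimed imbalance is substantial rather than vacuous.
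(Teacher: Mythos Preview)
Your core computation is identical to the paper's: both model a uniformly random Vertex Cut, compute $\E[\text{RF}(v_j)] = p\bigl(1-(1-\tfrac{1}{p})^{D(v_j)}\bigr)$, and read off the ratio between the extreme degrees. The only substantive difference is how the inequality ``$\ge$'' is justified. You argue within the random model, upgrading the ratio of expectations to the expected extremal ratio via $\E[\max_j \text{RF}(v_j)]\ge \max_j \E[\text{RF}(v_j)]$ and the dual bound for the minimum. The paper instead treats the displayed ratio as the imbalance achieved by random Vertex Cut and then argues that practical Vertex Cut heuristics deliberately cut high-degree vertices more often, so they can only increase the imbalance relative to random assignment; this is how the paper obtains ``greater than or equal to'' across the class of algorithms rather than just for the random one. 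Your observation that the power-law hypothesis plays no role in the algebra is accurate and matches the paper, which likewise never invokes it in the derivation.
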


\begin{proof}
Let us first consider a randomized vertex cuts algorithm that randomly assigns each edge to one of the $p$ partitions. The expected replication $\mathbb{E}[\text{RF}(v_j) ]$ can be computed by considering the process of randomly assigning the $D(v_j)$ edges adjacent to $v_j$. Let the indicator $X_i$ denote the event that the vertex $v_j$ has at least one of its edges on the machine $i$. The expectation is then
\begin{align*}
    \mathbb{E}[\text{RF}(v_j) ] &= p \cdot (1  - \mathbf{P}(v_j \text{ has no edge on machine } i)) \\
    &= p \cdot (1 - (1 - \frac{1}{p})^{D(v_j)})
\end{align*}
Thus, the maximum imbalance ratio among all nodes is $\frac{1 - (1 - \frac{1}{p})^{\max_j D(v_j)}}{1 - (1 - \frac{1}{p})^{\min_j D(v_j)}}$.

More advanced vertex cut algorithms are designed by enforcing the heuristics that cutting higher-degree vertices can result in fewer mirror vertices, \eg, \citep{xie2014distributed,zhang2017graph}. Thus, high-degree nodes tend to have even more duplications, and the imbalance ratio of more advanced vertex cut algorithms will only be higher.
\end{proof}

\begin{theorem}
Let $f_\theta : \gG \rightarrow \mathbb{R}^c$ be a single-layer GraphSAGE model with sigmoid activation and binary cross-entropy loss. Let there be an undirected, unweighted
graph $\gG = (\gV, \gE)$, together with its Vertex Cut partition $\{\gG[1], ..., \gG[p] \}$, where $\gG[i] = (\gV[i], \gE[i])$. Assuming graph $\gG$ is homophilic, then for a distributed training algorithm running on all partitions, optimizing the following reweighted loss recovers the full graph training paradigm.
\begin{align*}
    \mathcal{L}(f_\theta, \gG[i]) = \sum_{v_j[i] \in \gV[i]}  \frac{D(v_j[i])}{D(v_j)} \cdot \ell(\vh_j[i], y_j)
\end{align*}
\end{theorem}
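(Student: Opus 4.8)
The plan is to show that, for a single-layer GraphSAGE model, the gradient of the reweighted per-partition loss summed over all $p$ partitions equals the gradient of the full-graph loss, node by node. First I would write out explicitly the GraphSAGE forward pass on the full graph: for node $v_j$, the pre-activation is a linear function of $\vh_j$ (the self term) and of the mean $\frac{1}{D(v_j)}\sum_{u\in\gN_\gG(v_j)}\textsc{ReLU}(\mW\vx_u)$ (the neighbor term). The key observation driving the proof is that, because the mean aggregator divides by $D(v_j)$, the aggregated neighbor message can be rewritten as $\frac{1}{D(v_j)}\sum_u \textsc{ReLU}(\mW\vx_u)$, and under a Vertex Cut each edge $(v_j,u)\in\gE$ appears in exactly one partition $\gE[i]$ (edges are partitioned, not duplicated). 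Hence $\sum_{i=1}^p \sum_{u\in\gN_{\gG[i]}(v_j)} \textsc{ReLU}(\mW\vx_u) = \sum_{u\in\gN_\gG(v_j)} \textsc{ReLU}(\mW\vx_u)$, i.e. the unnormalized neighbor messages decompose additively across partitions.

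Next I would compute the node embedding $\vh_j[i]$ that partition $i$ actually produces: it uses the local degree $D(v_j[i])$ as normalizer, so $\vh_j[i]$ is built from $\frac{1}{D(v_j[i])}\sum_{u\in\gN_{\gG[i]}(v_j)}\textsc{ReLU}(\mW\vx_u)$. Here I would invoke the homophily assumption: the point is that the local neighbor aggregate in each partition, while computed from a subset of neighbors, is treated as representative of the full neighborhood — more precisely, that the per-partition embedding $\vh_j[i]$ can be assumed (under homophily) to coincide with the full-graph embedding $\vh_j$, since homophilic neighborhoods have low feature variance so any subset of neighbors yields the same mean. I would then differentiate the binary cross-entropy loss composed with sigmoid: the gradient with respect to $\theta$ of $\ell(\vh_j[i], y_j)$ factors through $\partial \vh_j[i]/\partial\theta$, and the weight $D(v_j[i])/D(v_j)$ is exactly what converts the local-degree normalization back into the global-degree normalization. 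Summing over $i$ and using the additive decomposition of the unnormalized messages, the per-node contributions reassemble into the gradient of $\ell(\vh_j, y_j)$ on the full graph, and summing over $j$ gives $\nabla_\theta \sum_i \mathcal{L}(f_\theta,\gG[i]) = \nabla_\theta \sum_{v_j\in\gV}\ell(\vh_j,y_j)$, which is the full-graph ERM gradient; since the distributed algorithm gathers and averages (or sums) these gradients, it follows the full-graph training trajectory.

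The main obstacle I expect is making the homophily assumption do real work rigorously rather than hand-wavingly. The reweighting $D(v_j[i])/D(v_j)$ alone corrects the \emph{self term} and the \emph{count} of neighbor terms, but the mean aggregator inside partition $i$ normalizes by $D(v_j[i])$, not $D(v_j)$, so the local embedding $\vh_j[i]$ is genuinely different from $\vh_j$ unless the neighbor messages are (approximately) constant across the neighborhood — that is precisely where homophily enters. I would likely state this as: under homophily, $\textsc{ReLU}(\mW\vx_u)\approx \textsc{ReLU}(\mW\vx_{u'})$ for neighbors $u,u'$ of $v_j$, so $\frac{1}{D(v_j[i])}\sum_{u\in\gN_{\gG[i]}(v_j)}\textsc{ReLU}(\mW\vx_u) \approx \frac{1}{D(v_j)}\sum_{u\in\gN_\gG(v_j)}\textsc{ReLU}(\mW\vx_u)$, hence $\vh_j[i]\approx\vh_j$ and the reweighted sum of losses has gradient matching the full-graph loss. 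A secondary subtlety is handling nodes $v_j$ with $D(v_j[i])=0$ in some partition (they contribute nothing, and the weights $D(v_j[i])/D(v_j)$ correctly sum to $1$ over the partitions containing at least one incident edge), and confirming that $\sum_{i: v_j\in\gV[i]} D(v_j[i]) = D(v_j)$, which is immediate from the edge partition being a disjoint cover of $\gE$. I would close by noting the multi-layer and general-loss cases are not claimed by the theorem and are left to the empirical evaluation.
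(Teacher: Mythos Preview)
Your proposal is correct and follows essentially the same approach as the paper: both arguments invoke homophily to justify $\vh_j[i]\approx\vh_j$ (hence $\nabla_{\vh_j[i]}\ell\approx\nabla_{\vh_j}\ell$), then observe that the mean aggregator introduces a $1/D(v_j[i])$ versus $1/D(v_j)$ mismatch which the weight $D(v_j[i])/D(v_j)$ exactly cancels, and finally use the disjoint edge partition $\bigcup_i\gE[i]=\gE$ to reassemble the full-graph gradient. The only cosmetic difference is that the paper frames the calculation as ``gradient flow along an edge $(k,j)$'' (computing $\nabla_{\vh_k^{L-1}}\mathcal{L}$ and reading off the $1/D(v_j)$ factor there), whereas you work directly with $\nabla_\theta$ of the per-node loss and the additive decomposition of the unnormalized neighbor sum; your treatment of the self term and the identity $\sum_i D(v_j[i])=D(v_j)$ is in fact more explicit than the paper's.
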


\begin{proof}
First, since the graph is assumed to exhibit homophily, and all the partitions $\{ \gG[1], \gG[2], ..., \gG[p] \}$ do not contain isolated nodes, the embedding of each node does not deviate significantly from its embedding in the full graph:
\begin{align*}
\mathbf{h}^L_{j} \approx \mathbf{h}^L_{j} [i] \text{ for } i = 1, \dots, p
\end{align*}
Thus,
\begin{align*}
\nabla_{\mathbf{h}_j^L} \mathcal{L} (\mathbf{h}_j^L, y_j) \approx \nabla_{\mathbf{h}_j^L[i]} \mathcal{L} (\mathbf{h}_j^{L}[i], y_j)
\end{align*}
As each edge is not duplicated, we study the gradient flow along the edge $(k,j)$.
For the original graph, we have 
\begin{align*}
    \nabla_{\vh_k^{L-1}} \mathcal{L} (\vh_j^L, y_j) &= \nabla_{\vh_k^{L-1}} \vh_j^L \times \nabla_{\vh_j^L}   \mathcal{L} (\vh_j^{L}, y_j) \\
    &= \frac{1}{D(v_j)} (\mathbb{I}(\mW^{L-1}\vh_k^{L-1} > 0) \mU^{L-1} \cdot \mW^{L-1}) \times \nabla_{\vh_j^L}   \mathcal{L} (\vh_j^{L}, y_j) 
\end{align*}
For the Vertex Cut partition, assuming edge $(k,j)$ lies in partition $\gG[i]$, we have 
\begin{align*}
    \nabla_{\vh_k^{L-1}[i]} \mathcal{L} (\vh_j^L[i], y_j) &= \nabla_{\vh_k^{L-1}[i]} \vh_j^L[i] \times \nabla_{\vh_j^L[i]}   \mathcal{L} (\vh_j^{L}[i], y_j) \\
    &= \frac{1}{D(v_j[i])} (\mathbb{I}(\mW^{L-1}\vh_k^{L-1}[i] > 0) \mU^{L-1} \cdot \mW^{L-1}) \times \nabla_{\vh_j^L[i]}   \mathcal{L} (\vh_j^{L}[i], y_j) 
\end{align*}
Given the above estimation, we can conclude that when choosing $\vw_{ij} =  \frac{D(v_j[i])}{D(v_j)}$, we have
\begin{align*}
    \nabla_\theta \sum_{v_j \in \gV} \mathcal{L} (\vh_j^L, y_j) \approx \sum_i  \nabla_\theta \sum_{v_j \in \gV[i]} \vw_{ij} \cdot \mathcal{L} (\vh_j^{L}[i], y_j)
\end{align*}
\end{proof}

\begin{theorem}
DropEdge reduces bias induced by graph partitioning and introduces an additional regularization term into the distributed optimization.
\end{theorem}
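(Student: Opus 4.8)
The plan is to model DropEdge as a random binary mask and to split the statement into its two halves: isolating an extra regularization term in the distributed objective, and showing that the leading-order partitioning bias strictly decreases. Fix a drop rate $q\in(0,1)$ and, on each partition $\gG[i]$, let $M^{(i)}=\{M^{(i)}_e\}_{e\in\gE[i]}$ have i.i.d.\ Bernoulli$(1-q)$ entries, so the masked partition retains edge $e$ iff $M^{(i)}_e=1$. Write $\mathcal{L}(f_\theta,\gG[i];M^{(i)})$ for the reweighted loss of Equation~\ref{eq:reweight} computed on the masked partition, and take the DropEdge objective on that partition to be $\bar{\mathcal{L}}_i(\theta)=\mathbb{E}_{M^{(i)}}[\mathcal{L}(f_\theta,\gG[i];M^{(i)})]$, so the total distributed objective is $\sum_{i=1}^{p}\bar{\mathcal{L}}_i(\theta)$. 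The first move is a second-order Taylor expansion of $\bar{\mathcal{L}}_i$ in the relaxed mask variables $m\in[0,1]^{|\gE[i]|}$ around their mean $\bar m_e=1-q$.

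For the regularization claim, such an expansion gives
\begin{align*}
\bar{\mathcal{L}}_i(\theta)\;=\;\mathcal{L}\big(f_\theta,\gG[i];\bar m\big)\;+\;\frac{q(1-q)}{2}\sum_{e\in\gE[i]}\frac{\partial^2\mathcal{L}}{\partial m_e^2}\bigg|_{m=\bar m}\;+\;o\big(q(1-q)\big),
\end{align*}
where the first-order term has zero expectation because $\mathbb{E}[M^{(i)}_e]=\bar m_e$, and the cross terms vanish by independence of the $M^{(i)}_e$. Since the mean aggregator normalizes by the retained degree, the constant $1-q$ cancels in $\mathcal{L}(f_\theta,\gG[i];\bar m)$, so the first summand is exactly the un-dropped reweighted loss, and the second summand is the announced data-dependent regularizer $\mathcal{R}_i(\theta)=\tfrac{q(1-q)}{2}\sum_{e}\partial^2_{m_e}\mathcal{L}$. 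I would then evaluate $\partial^2_{m_e}\mathcal{L}$ explicitly for the single-layer GraphSAGE with sigmoid activation and binary cross-entropy, reusing the gradient-flow-along-an-edge computation from the proof of the reweighting theorem; this exhibits $\mathcal{R}_i$ as a sum over edges $(k,j)$ of a quadratic form in $\mW^{L-1}\vh_k^{L-1}$, scaled by $1/D(v_j[i])^2$ and the loss curvature $\ell''$, hence a genuine parameter penalty rather than a constant.

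For the bias claim, I return to the quantity that the proof of the reweighting theorem already controls: the gap between the full-graph gradient $\nabla_\theta\sum_{v_j\in\gV}\mathcal{L}(\vh_j^L,y_j)$ and the aggregated reweighted partition gradient, which is governed by the embedding discrepancy $\|\vh_j^L-\vh_j^L[i]\|$ arising from the cross-partition edges absent from $\gE[i]$. The point is that once DropEdge is active, the target full-graph computation also aggregates, at each node, only a random $(1-q)$-fraction of its incident edges; the partition computation restricts to $\gE[i]$ and thins a further $q$-fraction, so after the $D(v_j[i])/D(v_j)$ reweighting it is a sub-sampling of the very same neighbor-sampling law. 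Taking expectations over the masks on both sides, the part of the bias that is first order in the number of missing edges cancels, leaving an $O(q^2)$ remainder; invoking homophily once more, so that the residual embedding gaps are not amplified through the single layer, one concludes that the expected bias is strictly below its value at $q=0$.

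The statement is only qualitative, so the real work lies in making precise what ``reduces bias'' means and in handling the ReLU: the Hessian in the mask variables exists only off a measure-zero set, so I would either smooth the activation and pass to the limit, or argue that the kink contributes only to the $o(q(1-q))$ remainder, as in the reweighting proof. The other delicate point is that the cross-edge second-order contributions must be shown genuinely negligible rather than merely zero-mean; the cleanest route is to bound them using bounded maximum degree and homophily -- exactly the ingredients used to bound $\|\vh_j^L-\vh_j^L[i]\|$ in the reweighting theorem. Turning the ``same sampling law'' matching in the bias step into a rigorous cancellation, rather than a heuristic, is where most of the care will be needed.
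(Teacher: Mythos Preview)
Your proposal shares the paper's core device---a second-order Taylor expansion in which the first-order term vanishes by zero mean and the second-order term is read off as a regularizer---but the two expansions are set up differently. The paper works at the \emph{message} level: it writes the DropEdge perturbation as $\delta_j^l=\eta\,\vm_j^l$ for a single zero-mean scalar $\eta$ (with the dropout-style rescaling so that $(1+\eta)\in\{0,\tfrac{1}{1-p}\}$), then expands $\mathcal{L}$ in $\delta_j^l$ around $0$ and invokes \citet{wei2020implicit} to justify truncating at second order. The linear term dies because $\mathbb{E}[\delta_j^l]=0$, and the quadratic $\tfrac{1}{2}{\delta_j^l}^\top D^2_{\vm_j^l}\mathcal{L}\,\delta_j^l$ is declared the regularizer. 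That is the entire argument; the paper does not separately establish the ``reduces bias'' clause.

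You instead expand in the \emph{edge-mask} coordinates around their mean $\bar m=1-q$, which is finer-grained and more faithful to what DropEdge actually does. This buys you an explicit per-edge regularizer $\tfrac{q(1-q)}{2}\sum_e\partial^2_{m_e}\mathcal{L}$ and lets you exploit the mean-aggregator cancellation to identify the zeroth-order term with the un-dropped loss, at the cost of having to argue away the cross-edge terms and the ReLU non-smoothness---issues you correctly flag. Your separate bias-reduction step (matching the sub-sampling laws on the full graph and on partitions, then taking expectations) goes well beyond what the paper proves; the paper's proof simply does not contain an analogue of that argument. So your route is not wrong, just more ambitious: the paper settles for the coarser message-level expansion and leaves the bias claim essentially unproved, whereas you attempt both halves of the theorem in earnest.
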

\begin{proof}
We represent $\vm^{l}_j$ as $\textsc{Msg}^{(l)}(\vh_j^{(l)})$, and introduce the random variable
\begin{align*}
\eta = \begin{cases}
-1 & \text{with probability } p \\
\frac{1}{1-p} & \text{with probability } 1-p 
\end{cases}
\end{align*}
We define $ \delta_j^l \triangleq (\mathbf{1} + \eta) \vm_j^l - \vm_j^l$ as the disturbance in the messages.
To analyze the impact of this disturbance, we use a Taylor expansion around $\delta_j^{l}=0$ on the final loss. In Section A.2 of~\citet{wei2020implicit}, it is suggested that when $ W \cdot \vm_j^{l} \approx 0$, the perturbation of $\delta^{l}_j$ to the loss function may not be significantly large, thus supporting the application of the Taylor Expansion.
\begin{align}
\label{eq:taylor}
&\mathbb{E}_{\delta_j^l} [ \mathcal{L}(\vm_j^l + \delta_j^l) - \mathcal{L}(\vm_j^l) \nonumber ] \\
\approx& \mathbb{E}_{\delta_j^l} [ D_{\vm_j^l} \mathcal{L}(\vm_j^l)\delta_j^l + \frac{1}{2} {\delta^l_j}^\top (D^2_{\vm_j^l} \mathcal{L} (\vm_j^l)) \delta_j^l ]
\end{align}
In Eq.~\ref{eq:taylor}, the expectation over the linear term vanishes because $\delta_j^l$ is a vector with zero mean. Consequently, the second-order term acts as additional regularization.
\end{proof}

\section{Additional Implementation Details}
\label{app:implementation}

\xhdr{Implementation details for Reddit} We consider GraphSAGE with 4-layer and 256 hidden units.
We use Adam optimizer with the base learning rate of 0.01. We train for 3000 epochs and 0.3 dropout rate.

\xhdr{Implementation details for ogbn-products} We consider GraphSAGE with 3-layer and 128 hidden units.
We use Adam optimizer with the base learning rate of 0.003. We train for 500 epochs and 0.3 dropout rate.

\xhdr{Implementation details for Yelp} We consider GraphSAGE with 4-layer and 512 hidden units.
We use Adam optimizer with the base learning rate of 0.01. We train for 3000 epochs and 0.1 dropout rate.

\xhdr{Implementation details for ogbn-papers100M} We consider GraphSAGE with 3-layer and 128 hidden units.
We use Adam optimizer with the base learning rate of 0.01. We train for 100 epochs and 0.5 dropout rate.

\section{Additional Experiments}

\xhdr{Scaling the number of partitions} In Section~\ref{sec:empirical_comparisons}, we demonstrated the training speed of \ours{} scales up well due to its adoption of a standard data parallel training pipeline.
Here, we further demonstrate that \ours{} can maintain a performance level that is comparable to the full graph paradigm as the number of partitions increases. To do this, we gradually increased the number of partitions to 256 (see Figure~\ref{fig:partitions}). These experiments were carried out in a simulated environment by accumulating gradients.
The results show that \ours{} maintains the training accuracy even when scaling up the number of partitions.

\xhdr{Ablation study on partition algorithms} Our full pipeline \ours{} is agnostic about specific partition algorithms. We incorporated an ablation study focusing on various graph partition algorithms.
We find that Edge Cut partition (METIS) underperforms Vertex Cut partitions. The performance gaps of other partitioning methods are relatively minor across the datasets.

\label{app:exp}
\begin{figure}[h]
\centering
\includegraphics[width=0.5\textwidth]{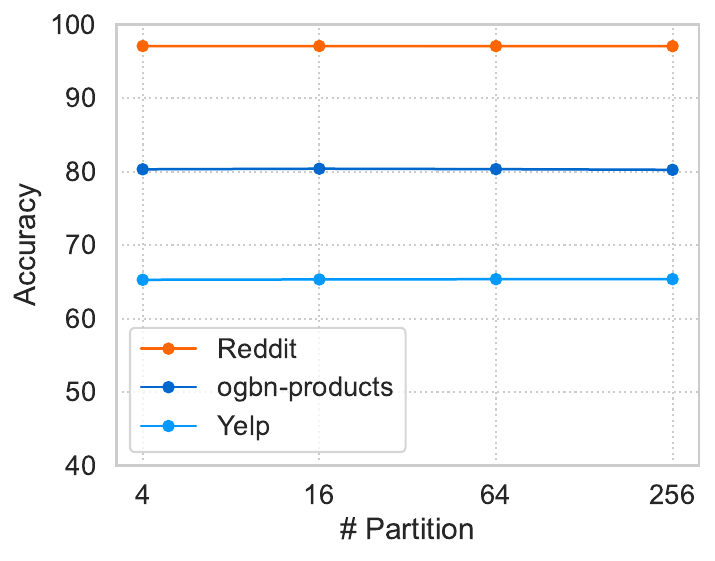}
\caption{Ablation study on the effect of number of partitions on testing accuracy.
}
\label{fig:partitions}
\end{figure}

\begin{table}[htpb]
\centering
\caption{Ablation study on partition algorithms the average accuracy and standard deviation across 10 trials on 256 partitions' setting.} 
\label{tab:partitionchoice}
\begin{tabular}{l|l|ccc}
\toprule
& & Reddit &  ogbn-products & Yelp \\
             \midrule
Edge Cut &  METIS~\cite{karypis1997metis}          &     95.42\std{0.02}      &     78.63\std{0.05}            &   61.01\std{0.04}      \\
Vertex Cut  &  Random          &    96.94\std{0.03}       &      79.88\std{0.07}           &     65.27\std{0.03}            \\
 Vertex Cut  &  NE~\cite{NE}     &  97.08\std{0.02}      &          80.26\std{0.03}                & 65.39\std{0.02}   \\
 Vertex Cut   & DBH~\cite{xie2014distributed}     &    97.06\std{0.02}       &      80.23\std{0.04}                   &  65.33\std{0.02} \\
  Vertex Cut  & HEP~\cite{hep}     &  97.13\std{0.02}         &         80.28\std{0.02}                &  65.40\std{0.02} \\
\bottomrule
\end{tabular}
\end{table}

\section{Broader Impact}
\label{app:impact}
This work presents a novel distributed Graph Neural Network (GNN) training framework called \ours{}, designed to substantially accelerate the training process while minimizing cross-GPU communication. This development could have several significant impacts across various fields and sectors. \textbf{Scalable Applications}: The scalability and adaptability of \ours{} could foster more extensive use of GNNs in applications that handle large-scale graph data. This includes social network analysis, recommendation systems, biological network analysis, and transportation networks, among others. \textbf{Efficient Resource Utilization}: By minimizing inter-GPU communication, \ours{} can enable more efficient use of hardware resources. This could have implications for energy usage and cost-effectiveness in data centers, potentially leading to more sustainable and affordable machine learning solutions. \textbf{Digital Divide}: The benefits of this technology might be unequally distributed, especially in regions with limited access to advanced computational resources necessary for large-scale GNNs. This could potentially exacerbate the digital divide, highlighting the need for equitable access to such resources.

\section{Limitations}
\label{app:limitations}
While our proposed framework, \ours{}, shows great promise in accelerating the training process of Graph Neural Networks (GNNs) and reducing the need for cross-GPU communication, it is important to acknowledge several potential limitations of our work: \textbf{Duplicated Nodes in Vertex Cut Partitioning}: While the Vertex Cut partitioning approach that we employ significantly reduces communication overhead, it may lead to increased memory usage due to the duplication of node information across partitions. This could become problematic when dealing with graphs where node attributes are numerous or large in size. \textbf{Node Duplication and Consistency}: Duplicating nodes across multiple GPUs could potentially lead to issues in maintaining consistency of node information, especially in online or incremental learning scenarios where node attributes are updated frequently.

\end{document}